\documentclass[sigconf]{acmart}
\AtBeginDocument{%
  }
\usepackage{algorithm}
\usepackage{algpseudocode}
\usepackage{array}
\usepackage{multirow}
\newcolumntype{M}[1]{>{\centering\arraybackslash}m{#1}}
\newtheorem{theorem}{Theorem}
\newtheorem{proposition}{Proposition}
\newtheorem{definition}{Definition}
\newtheorem{lemma}{Lemma}
\newtheorem{remark}{Remark}
\newtheorem{assumption}{Assumption}

\newtheorem{corollary}{Corollary}

\begin{document}

\title{CasualSynth: Generating Structurally Sound Synthetic Data}

\author{Zehua Cheng}
\affiliation{%
  \institution{Department of Computer Science, University of Oxford}
  \country{}
}
\email{zehua.cheng@cs.ox.ac.uk}

\author{Wei Dai, Jiahao Sun}
\affiliation{%
  \institution{FLock.io}
  \country{United Kingdom}
}
\email{sun@flock.io}
\author{Thomas Lukasiewicz}
\affiliation{%
  \institution{Institute of Logic and Computation, TU Wien}
  \country{}
}
\affiliation{%
  \institution{Department of Computer Science, University of Oxford}
  \country{}
}
\email{thomas.lukasiewicz@tuwien.ac.at}

\renewcommand{\shortauthors}{Cheng et al.}

\begin{abstract}
  Large Language Models (LLMs) generate realistic synthetic data but offer no guarantee that their outputs respect the causal mechanisms governing the target domain. We introduce \textbf{CausalSynth}, a framework that decouples causal structure generation from semantic realization, yielding synthetic data that is both causally valid and linguistically rich. The framework operates in three phases. First, a Structural Causal Model (SCM)---a tuple of structural equations defined over a directed acyclic graph (DAG)---generates \emph{causal skeletons}, i.e., variable assignments that satisfy the Global Markov Property of the governing DAG, via ancestral sampling. Second, an LLM acts as a constrained \emph{realizer}, a conditional translator that maps each skeleton to a high-dimensional observation such as a clinical note or a transaction log. Third, an Iterative Consistency Verification module detects structural violations through deterministic extraction and feeds targeted corrections back to the LLM, forming a closed-loop refinement process. We identify the \emph{Semantic Backdoor} problem---the systematic tendency of LLMs to override imposed causal facts with pre-training priors---and prove that our iterative mechanism reduces the resulting selection bias relative to standard rejection sampling. On three causal benchmarks (ASIA, ALARM, and MIMIC-Struct), CausalSynth preserved conditional independencies with false-positive rates near the nominal $\alpha=0.05$ level and achieved realizability rates above 96\% with 70B-parameter LLM backbones. The framework additionally supports principled interventional and counterfactual generation through noise retention and graph mutilation.
\end{abstract}

\begin{CCSXML}
<ccs2012>
   <concept>
       <concept_id>10010147.10010178.10010179</concept_id>
       <concept_desc>Computing methodologies~Natural language processing</concept_desc>
       <concept_significance>500</concept_significance>
       </concept>
   <concept>
       <concept_id>10010147.10010178.10010187.10010192</concept_id>
       <concept_desc>Computing methodologies~Causal reasoning and diagnostics</concept_desc>
       <concept_significance>500</concept_significance>
       </concept>
 </ccs2012>
\end{CCSXML}

\ccsdesc[500]{Computing methodologies~Natural language processing}
\ccsdesc[500]{Computing methodologies~Causal reasoning and diagnostics}
\keywords{Synthetic data generation, Structural causal models, Large language models, Causal inference, Counterfactual generation, Iterative verification, Semantic backdoor}


\maketitle

\section{Introduction}

Large Language Models (LLMs) have transformed data generation, enabling the synthesis of text that is often indistinguishable from human-authored content~\cite{brown2020language,openai2023gpt4}. This capability promises to accelerate model training, to augment sparse datasets, and to preserve privacy by substituting real records with synthetic counterparts~\cite{nikolenko2021synthetic,jordon2022synthetic}. As synthetic data increasingly underpins downstream decision systems, the fidelity of these generations becomes critical. Yet a tension persists between linguistic fluency and structural reliability: LLMs operate as probabilistic correlation engines that prioritize plausibility over logical consistency~\cite{bender2021dangers,marcus2018deep}. The distinction between surface-level correlation and underlying causal mechanism is not merely academic, but a matter of safety and efficacy in high-stakes deployments~\cite{kiciman2023causal}.

In scientific inquiry, policy-making, and medical diagnostics, the utility of data depends on its representation of causal truth, not mere association. A dataset used to train a diagnostic system must reflect the causal pathway from pathology to symptom, not just their co-occurrence in electronic health records. If a generative model reverses this relationship or introduces spurious confounders rooted in training-data biases, downstream models are fundamentally flawed~\cite{pearl2018book}. The challenge is therefore not to generate data that \textit{looks} real, but to generate data that faithfully respects the governing causal laws of the domain. We refer to this property as \emph{structural validity}: the requirement that the joint distribution of the generated data factorize according to the causal graph $\mathcal{G}$ of the target domain. Structural validity is indispensable for causal-inference tasks such as predicting the effects of medical interventions or economic policies, which require reasoning about counterfactual scenarios outside the observed training distribution~\cite{peters2017elements}.

Existing approaches for controlling LLM generation are insufficient for enforcing such structural validity. Constrained decoding and schema enforcement guarantee valid output formats (e.g., well-formed JSON) but are agnostic to semantic correctness~\cite{willard2023efficient,geng2023grammar}. Fine-tuning approaches for tabular data generation~\cite{borisov2023language} improve statistical fidelity but inherit no explicit structural blueprint. Knowledge-graph-augmented generation grounds outputs in factual relations, yet standard knowledge graphs encode static facts rather than the functional mechanisms required to model cause and effect~\cite{pan2024unifying}. Practitioners are thus forced to choose between the rich narrative capabilities of LLMs and the rigorous output of formal simulators. No unified framework yet leverages the generative creativity of LLMs while adhering to a mathematically rigorous causal blueprint.

To address this gap, we propose \textbf{CausalSynth}, which formally decouples the generation of causal structure from the realization of surface-level features. The \emph{causal skeleton} of a data point---a vector of variable assignments together with the mechanistic relationships linking them---is determined by a Structural Causal Model (SCM), the standard representation of causal knowledge~\cite{pearl2009causality}. Ancestral sampling from the SCM~\cite{bishop2006pattern,koller2009probabilistic} ensures that the latent structure is mathematically sound and free from spurious correlations. The skeleton is then passed to an LLM acting as a constrained \emph{realizer} that translates the abstract assignments into a coherent narrative. An Iterative Consistency Verification mechanism detects and corrects semantic violations via a feedback-driven refinement loop, provably reducing the selection bias caused by the LLM's non-uniform realizability across the skeleton space.

Our main contributions are as follows:
\begin{itemize}
  \item We introduce \textbf{CausalSynth}, a framework that integrates SCMs with LLMs to generate high-fidelity synthetic data whose causal structure is guaranteed by construction rather than learned implicitly. We provide a formal proof that the generated skeletons satisfy the Global Markov Property of the governing DAG (Theorem~\ref{theorem:structural-fidelity-ancestral-sampling}).
  \item We identify the \textbf{Semantic Backdoor} problem---the systematic tendency of LLMs to override imposed causal constraints with pre-training priors---and propose an Iterative Consistency Verification mechanism to mitigate it. We prove that this mechanism reduces selection bias relative to standard rejection sampling (Theorem~\ref{theorem:bias-red-via-iterative-refine}).
  \item We show on three causal benchmarks (ASIA, ALARM, MIMIC-Struct) that CausalSynth preserves conditional independencies with false-positive rates near the nominal level ($\approx 0.05$) and achieves realizability rates above 96\% with large-scale LLM backbones.
  \item We further validate the framework's support for interventional and counterfactual generation, including comparisons against modern tabular baselines (TabDDPM, GReaT) and a sensitivity analysis of the NLI verifier threshold.
\end{itemize}

\section{Related Works}

\subsection{Synthetic Data Generation}
Synthetic data generation has evolved from classical statistics to deep generative models. Early work centred on simulation-based generators using predefined parametric models~\cite{rubin1993statistical}. The advent of Generative Adversarial Networks (GANs)~\cite{goodfellow2014generative} and Variational Autoencoders (VAEs)~\cite{kingma2014auto} marked a paradigm shift, enabling implicit density estimation in high-dimensional spaces. For tabular data, CTGAN and TVAE~\cite{xu2019modeling} adapted these architectures to mixed-type columns and to mode collapse in categorical variables. More recently, diffusion models~\cite{kotelnikov2023tabddpm} and transformer-based approaches have shown promise; GReaT~\cite{borisov2023language} fine-tunes LLMs to generate tabular rows as serialized text. All of these methods approximate the joint distribution $P(\mathbf{V})$ without an explicit structural model, and as Table~\ref{tab:main-results} shows, this distributional agnosticism produces large violations of conditional independence---a failure mode that our SCM-first architecture avoids by construction.

A growing body of work uses LLMs as data generators, ranging from zero-shot prompting for dataset creation~\cite{veselovsky2023generating} to fine-tuning strategies that adapt foundation models to specific distributions~\cite{borisov2023language}. Chain-of-Thought prompting~\cite{wei2022chain} improves the reasoning capabilities of LLMs, and we use it to increase the model's adherence to structural constraints during realization. Recent studies of LLMs' causal-reasoning ability find that they can retrieve causal knowledge embedded in pre-training data but struggle with novel or counter-intuitive scenarios~\cite{kiciman2023causal,jin2024cladder}. This finding motivates our architectural choice: rather than relying on the LLM's implicit causal reasoning, we externalize the causal computation to the SCM and treat the LLM strictly as a conditioned text generator, with an external verifier responsible for detecting and correcting structural violations.

\subsection{Causal Modeling and Structural Causal Models}
Structural Causal Models, formalized by Pearl~\cite{pearl2009causality,pearl2000models}, provide the mathematical foundation for reasoning about interventions ($do$-calculus) and counterfactuals. The practical deployment of SCMs depends on the availability of a causal graph, which may be specified by domain experts or discovered from data using algorithms such as PC~\cite{spirtes2000causation}, GES~\cite{chickering2002optimal}, and continuous-optimization methods like NOTEARS~\cite{zheng2018dags}. A fundamental identifiability limit constrains all observational approaches: the true causal graph is recoverable only up to a Markov Equivalence Class (MEC)~\cite{verma1990equivalence}. We acknowledge this limit explicitly by distinguishing between Oracle and Learned settings and by conditioning all structural guarantees on the assumed model $\mathcal{M}$.

A growing line of work develops deep generative models that respect a known SCM. These methods share a common motivation: they encode the DAG structure into the generator so that interventional and counterfactual queries can be answered without re-training. \textsc{CausalGAN}~\cite{kocaoglu2018causalgan} parameterizes the structural equations with adversarial networks. \textsc{VACA}~\cite{sanchezmartin2022vaca} encodes DAG-structured priors in a variational graph autoencoder for counterfactual queries. Causal Normalizing Flows~\cite{javaloy2023causalnf} retrofit invertibility onto $\mathbf{F}$ to enable closed-form interventions. \textsc{DECI}~\cite{geffner2022deci} jointly learns structure and mechanisms end-to-end. \textsc{TabDDPM}~\cite{kotelnikov2023tabddpm} adapts denoising diffusion to mixed-type tabular columns, and \citet{wendong2024causalcomposition} compose synthetic datasets via structural augmentation. All of these methods operate exclusively in low-dimensional tabular space: they treat the realized observation as a vector of attributes and provide no machinery for translating skeletons into document-level free text. Adjacent lines have applied causal models to data augmentation~\cite{pawlowski2020deep}, fairness~\cite{kusner2017counterfactual}, and reliable decision support under distribution shift~\cite{schulam2017reliable,ilse2022combining}.

\subsection{Verified and Constrained Generation}
Ensuring the correctness of LLM outputs is an active research area that spans constrained decoding~\cite{willard2023efficient,geng2023grammar}, self-consistency checking~\cite{wang2023selfconsistency}, and formal verification of generated code~\cite{chen2021evaluating}. Schema-enforcement tools guarantee syntactic validity but provide no semantic guarantees, and knowledge-grounded generation~\cite{pan2024unifying,lewis2020retrieval} grounds outputs in retrieved facts but encodes static relations rather than functional mechanisms.

The broader hallucination literature offers a related view of the problem. \citet{ji2023hallucination} surveyed the systematic tendency of language models to produce content unsupported by their context, \citet{maynez2020faithfulness} documented the same phenomenon in summarization, and \citet{honovich2022true} re-evaluated factual-consistency benchmarks. The Semantic Backdoor we identify in this paper is best read as a structured specialization of these failures: rather than fabricating arbitrary unsupported facts, the LLM systematically distorts the conditional distribution along pre-training priors, in the same spirit as the calibration biases catalogued by~\citet{zhao2021calibrate} for few-shot in-context learning. The term ``semantic backdoor'' has been used adversarially in the instruction-tuning literature for trigger-induced behavior shifts~\cite{xu2024instructionalbackdoor,wan2023poisoning}; we adopt it in the non-adversarial sense---a benign but persistent prior that imposes a backdoor path between the pre-training context and the realized output---and we distinguish the two usages explicitly in Section~\ref{sec:constrained-semantic-realization}.

Our Iterative Consistency Verification module occupies a distinct position in this landscape. It combines a deterministic extraction-based verifier with targeted feedback to the LLM, forming a closed-loop system that provably reduces distributional bias (Theorem~\ref{theorem:bias-red-via-iterative-refine}). The approach is complementary to constrained decoding and could be combined with it to tighten the semantic guarantees of the realization phase. To our knowledge, CausalSynth is the first framework to provide formal guarantees on the causal structure of LLM-generated data through an explicit SCM backbone coupled with iterative verification.

\section{Methodology\label{sec:methodology}}

\subsection{Framework Overview}
\begin{figure*}\centering
  \includegraphics[width=0.93\textwidth]{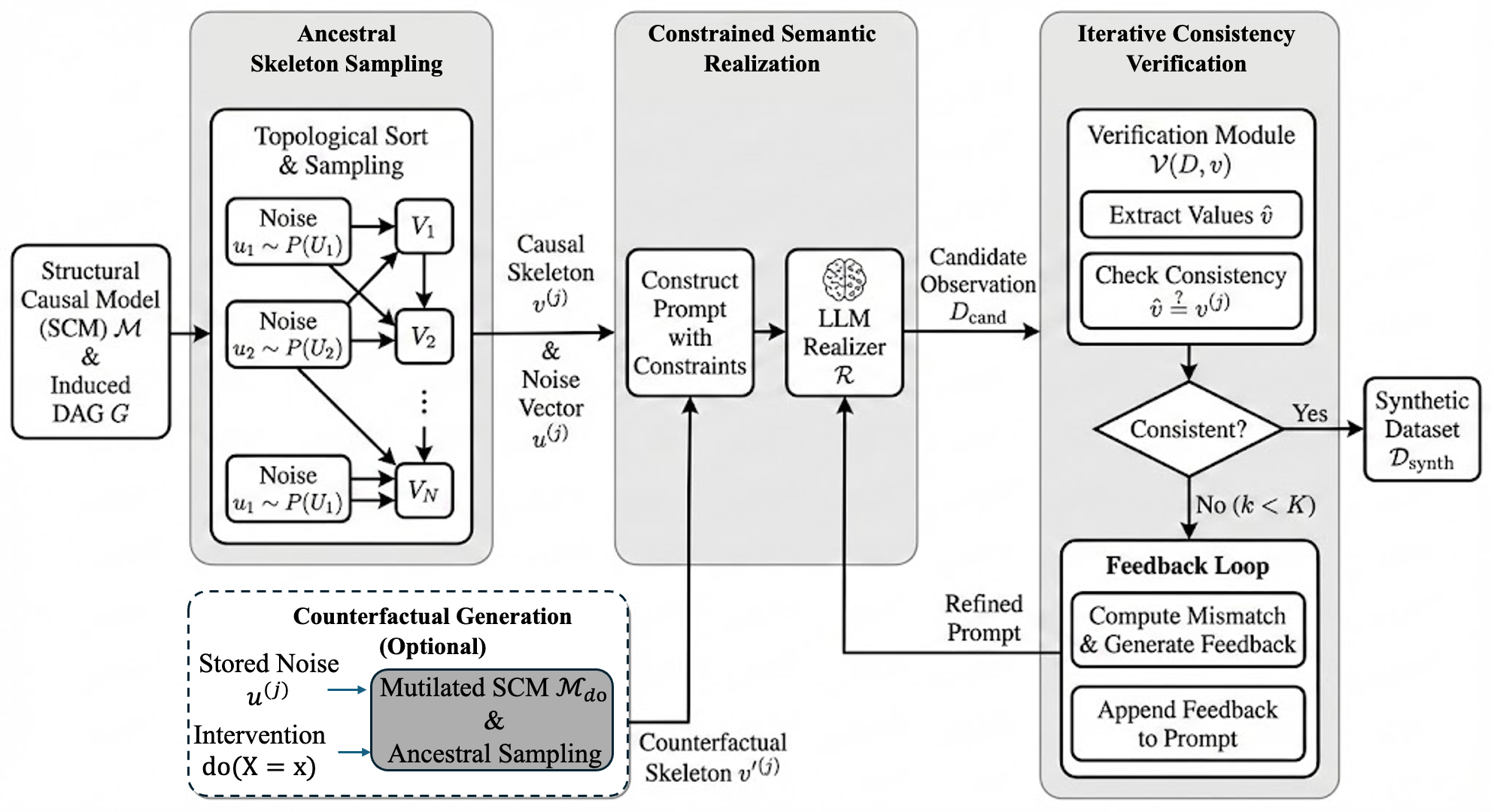}
  \caption{The CausalSynth Generation Pipeline. The framework operates in three main phases. In Phase I, a causal skeleton ($v^{(j)}$) and its associated noise vector ($u^{(j)}$) are sampled from the Structural Causal Model (SCM) via ancestral sampling. Phase II uses a Large Language Model (LLM) Realizer ($\mathcal{R}$) to translate this skeleton into a high-dimensional candidate observation ($D_{cand}$), conditioned on the skeleton values as constraints. Phase III employs a deterministic Verification Module ($\mathcal{V}$) to extract values ($\hat{v}$) from the generated text and check for consistency with the original skeleton. Inconsistent samples trigger an iterative feedback loop that refines the prompt for the LLM, up to $K$ attempts. Successfully verified samples are added to the final synthetic dataset ($\mathcal{D}_\text{synth}$). The diagram also shows the optional Counterfactual Generation process, where stored noise and interventions on the SCM are used to generate counterfactual skeletons ($v'^{(j)}$), which are then fed into the realization phase.\label{fig:overview}}
\end{figure*}
CausalSynth operates as a three-phase pipeline, summarized in Figure~\ref{fig:overview} and formalized in Algorithm~\ref{algo:casualsynth}. The full problem formulation is given in Appendix~\ref{sec:problem-formulation}. Phase~I, Ancestral Skeleton Sampling (Section~\ref{sec:ancestral-skeleton-sampling}), generates causal skeletons by sampling from the structural equations of $\mathcal{M}$ in topological order. Phase~II, Constrained Semantic Realization (Section~\ref{sec:constrained-semantic-realization}), translates each skeleton into a high-dimensional observation using an LLM conditioned on hard constraints. Phase~III, Iterative Consistency Verification (Section~\ref{sec:iterative-consistency-verification}), applies a feedback-driven refinement loop to detect and correct semantic violations. Section~\ref{sec:counterfactual-interventional-gen} describes how the framework extends to interventional and counterfactual generation.

\begin{algorithm}[t]
\caption{CausalSynth with Iterative Refinement\label{algo:casualsynth}}
\begin{algorithmic}[1]
\State \textbf{Input:} SCM $\mathcal{M} = \langle \mathbf{U}, \mathbf{V}, \mathbf{F}, P_{\mathbf{U}} \rangle$ with induced DAG $\mathcal{G}$; LLM Realizer $\mathcal{R}$; Verifier $\mathcal{V}$; Number of samples $M$; Maximum refinements $K$
\State \textbf{Output:} Synthetic dataset $\mathcal{D}_\text{synth}$; Coverage failure log $\mathcal{L}$
\Statex
\State Initialize $\mathcal{D}_\text{synth} \leftarrow \emptyset, \mathcal{L} \leftarrow \emptyset$
\For{$j = 1$ to $M$} \Comment{\textbf{Phase I: Ancestral Skeleton Sampling}}
    \State Initialize $\mathbf{v}^{(j)} \leftarrow \emptyset, \mathbf{u}^{(j)} \leftarrow \emptyset$
    \For{each node $V_i$ in $\text{TopologicalSort}(\mathcal{G})$}
        \State Sample noise: $u_i \sim P(U_i)$
        \State Compute value: $v_i \leftarrow f_i(PA_i, u_i)$
        \State Append $v_i$ to $\mathbf{v}^{(j)}$, append $u_i$ to $\mathbf{u}^{(j)}$
    \EndFor 
    \Statex \hspace{\algorithmicindent} // \textbf{Phase II: Constrained Realization with Iterative Refinement}
    \State $success \leftarrow \text{False}, k \leftarrow 0$
    \State $prompt \leftarrow \text{ConstructPrompt}(\mathbf{v}^{(j)})$
    \While{not $success$ and $k < K$}
        \State Generate candidate: $D_{cand} \sim \mathcal{R}(prompt)$
        \State Extract values: $\hat{\mathbf{v}} \leftarrow \text{Extract}(D_{cand})$
        \If{$\hat{\mathbf{v}} = \mathbf{v}^{(j)}$}
            \State Add $(\mathbf{v}^{(j)}, \mathbf{u}^{(j)}, D_{cand})$ to $\mathcal{D}_\text{synth}$
            \State $success \leftarrow \text{True}$
        \Else
            \State Compute mismatch set: $\mathcal{E} = \{ i \mid \hat{v}_i \neq v_i^{(j)} \}$
            \State $prompt \leftarrow \text{AppendFeedback}(prompt, \mathcal{E})$
            \State $k \leftarrow k + 1$
        \EndIf
    \EndWhile
    \If{not $success$}
        \State Log coverage failure: $\mathcal{L} \leftarrow \mathcal{L} \cup \{ \mathbf{v}^{(j)} \}$
    \EndIf
\EndFor
\State \textbf{Return} $\mathcal{D}_\text{synth}, \mathcal{L}$
\end{algorithmic}
\end{algorithm}

\subsection{Ancestral Skeleton Sampling\label{sec:ancestral-skeleton-sampling}}

Phase~I constructs the causal skeleton $\mathbf{v}$ by drawing samples from the joint distribution implied by the SCM, $P_{\mathcal{M}}(\mathbf{V})$. One could instead sample from a learned latent space $\mathbf{z} \sim \mathcal{N}(0, I)$ and decode via a neural network, as in standard deep generative models. However, this approach entangles the latent factors and offers no guarantee that the samples respect the conditional independence structure of $\mathcal{G}$. We therefore adopt ancestral sampling, a classical procedure that follows the topological ordering of the causal graph and produces samples whose joint distribution factorizes according to $\mathcal{G}$ by construction.

We first compute a topological sort of $\mathcal{G}$, producing an ordering such that every node appears after all of its parents, and then iterate through this sequence for $i = 1, \dots, N$. For each node $V_i$, we sample the exogenous noise $u_i \sim P(U_i)$ and compute the endogenous variable deterministically as $v_i = f_i(PA_i, u_i)$. By the mutual independence of the noise terms and the recursive structure of the computation, the joint distribution of the resulting skeleton factorizes as
\begin{equation}
  P_{\mathrm{skel}}(\mathbf{v}) = \prod_{i=1}^{N} P(v_i \mid PA_i).
\end{equation}
This factorization is the standard result that ancestral sampling from an SCM with independent noise produces a distribution that is Markovian with respect to the induced DAG (\citet{pearl2000models}, Theorem 1.4.1). The structural fidelity of our skeletons follows directly from this result, provided that the noise independence assumption holds and that $\mathcal{M}$ is correctly specified.

A distinguishing design choice is the explicit retention of the noise vector $\mathbf{u} = [u_1, \dots, u_N]^\top$ alongside each skeleton $\mathbf{v}$. In standard observational data, recovering $\mathbf{u}$ requires abduction, an inverse problem that is often plagued by identifiability issues. Because we generate skeletons constructively, $\mathbf{u}$ is directly available as a first-class output of the sampling process. This choice is motivated by the requirements of counterfactual generation (Section~\ref{sec:counterfactual-interventional-gen}): replaying the same unit-specific noise under modified structural assumptions is essential for producing consistent counterfactual pairs.
\subsection{Constrained Semantic Realization\label{sec:constrained-semantic-realization}}

Once the causal skeleton $\mathbf{v}$ is fixed, the challenge shifts from structural sampling to semantic translation: converting a sparse tabular vector into a rich unstructured observation $D$, such as a clinical narrative, a transaction log, or a policy document. We employ a Large Language Model as a stochastic realization function $\mathcal{R}: \mathcal{V} \rightarrow \mathcal{D}$. The core concern motivating the design of this phase is what we call the \emph{Semantic Backdoor} problem. LLMs are trained on large corpora that embed implicit statistical priors $P_{\mathrm{pretrain}}$ which may contradict the specific causal facts encoded in the skeleton, $P_{\mathrm{struct}}$. For example, if the SCM specifies a rare but valid combination such as ``High Income $\to$ Low Spending'' due to a latent confounder, an unconstrained LLM may override this assignment to minimize its internal perplexity, reverting to the stereotypical association ``High Income $\to$ High Spending''. This drift from imposed structure to internalized prior is a systematic failure mode, not random noise. We use the term \emph{Semantic Backdoor} in the non-adversarial sense: a benign but persistent prior whose effect is to introduce an unintended backdoor path between the pre-training distribution and the realized output. This usage is distinct from the adversarial ``semantic backdoor'' of the instruction-tuning literature~\cite{xu2024instructionalbackdoor,wan2023poisoning}, which refers to malicious trigger-induced behavior shifts.

To mitigate the Semantic Backdoor, we treat the LLM not as an unconstrained generator but as a conditional translator. We model the realization process as $D \sim P_{\mathrm{LLM}}(D \mid \mathbf{v}, \mathcal{I}_{\mathrm{prompt}})$, where $\mathcal{I}_{\mathrm{prompt}}$ is a structured instruction set. The skeleton $\mathbf{v}$ is formatted not as background context but as a non-negotiable list of facts that must appear in the output. We further employ a Chain-of-Thought (CoT) prompting strategy~\cite{wei2022chain}: the model is instructed to enumerate the constraints it intends to satisfy before generating the narrative. This intermediate reasoning step primes the Transformer's attention and increases the probability mass assigned to structurally consistent token sequences. Despite these precautions, $\mathcal{R}$ acts as a noisy channel; output validity cannot be assumed from prompting alone, which motivates the verification phase described next.
\subsection{Iterative Consistency Verification\label{sec:iterative-consistency-verification}}

A natural approach to ensuring that the realized text $D$ faithfully encodes the skeleton $\mathbf{v}$ is rejection sampling: generate $D$, verify consistency, and discard the sample upon failure. We identify a critical problem with this standard practice. The probability of the LLM failing to realize $\mathbf{v}$, $P(\text{fail} \mid \mathbf{v})$, is not uniform across the domain of $\mathbf{V}$. The LLM fails more often on tail events or counter-intuitive causal combinations---precisely those cases where the SCM's structure diverges most from the LLM's pre-training distribution. Discarding such skeletons effectively censors rare events from the dataset, distorts the marginal distribution $P(\mathbf{V})$, and introduces a selection bias that undermines the structural fidelity established in Phase~I.

We therefore introduce an \emph{iterative refinement with feedback} mechanism that decouples skeleton acceptance from realization success. When realization fails, we discard the text $D$ but retain the skeleton $\mathbf{v}$, and the LLM is asked to realize $\mathbf{v}$ again with explicit feedback on the prior failure. A deterministic verification module $\mathcal{V}(D, \mathbf{v})$ uses high-precision extraction to recover $\hat{\mathbf{v}}$ from $D$ and evaluates the consistency condition $\mathbb{1}[\hat{\mathbf{v}} = \mathbf{v}]$. When this condition fails, a targeted feedback prompt identifies the discrepancy (e.g., ``You generated `No fever', but the skeleton requires `Temperature: 39\textdegree{}C'. Rewrite accordingly.''). The loop continues for up to $K$ refinement attempts per skeleton.

We are transparent about what this procedure does and does not achieve. The iterative mechanism improves the per-skeleton realization probability relative to a single-shot attempt, but it does not eliminate selection bias entirely. After $K$ attempts, some skeletons---particularly those encoding rare or counter-intuitive causal states---may still fail. Let $\phi_K(\mathbf{v})$ denote the cumulative probability of successful realization within $K$ attempts; the resulting accepted distribution $P_{\mathrm{final}}(\mathbf{v}) \propto P_{\mathrm{skel}}(\mathbf{v})\,\phi_K(\mathbf{v})$ equals $P_{\mathrm{skel}}$ if and only if $\phi_K$ is constant. We formalize $\phi_K$ and characterize the bias reduction in Section~\ref{sec:bias-analysis} (Definition~\ref{def:semantic-realization}, Proposition~\ref{prop:distribution-accepted-skeleton}, and Theorem~\ref{theorem:bias-red-via-iterative-refine}).

\paragraph{Practical Considerations.}
We manage the residual bias through two mechanisms. First, the iterative feedback loop with non-decreasing $p_k(\mathbf{v})$ narrows the gap between high- and low-realizability skeletons relative to single-shot generation. Second, we explicitly log all \emph{coverage failures}---skeletons that remain unrealized after $K$ attempts---rather than silently dropping them. This logging serves a dual purpose: it provides an empirical estimate of the realizability gap across the support of $P_{\mathrm{skel}}$, and it enables downstream users to assess whether the truncation materially affects their application. For safety-critical domains, a truncated distribution of verified samples is preferable to a full-support distribution contaminated by hallucinated or structurally inconsistent records.
\subsection{Counterfactual and Interventional Generation\label{sec:counterfactual-interventional-gen}}

A distinguishing capability of CausalSynth is its support for generating data beyond the observational distribution. Because the framework operates directly on the SCM, both interventional and counterfactual generation reduce to principled modifications of the structural equations. To simulate policy interventions, we construct a mutilated model $\mathcal{M}_{do(V_i = x)}$ by removing all incoming edges to $V_i$ in $\mathcal{G}$ and replacing its structural equation with $V_i \leftarrow x$. Ancestral sampling (Section~\ref{sec:ancestral-skeleton-sampling}) then proceeds on the mutilated graph; the realizer $\mathcal{R}$ is agnostic to whether the skeleton originated from $\mathcal{M}$ or from $\mathcal{M}_{do}$.

For counterfactual generation, we follow Pearl's three-step procedure~\cite{pearl2009causality}: (1)~\textbf{Abduction}: retrieve the stored noise vector $\mathbf{u}^{(j)}$ associated with skeleton $\mathbf{v}^{(j)}$. This step is trivial in our framework because $\mathbf{u}$ is a first-class output of Phase~I (Section~\ref{sec:ancestral-skeleton-sampling}), which avoids the identifiability challenges of abduction from observed data. (2)~\textbf{Action}: mutilate $\mathcal{G}$ according to the counterfactual antecedent. (3)~\textbf{Prediction}: re-run the structural equations with the original $\mathbf{u}^{(j)}$ on the modified graph to produce $\mathbf{v}'^{(j)}$. Section~\ref{sec:counterfactual-validity} formalizes the Noise Retention Integrity requirement (Assumption~\ref{assumption:noise-retention}) and proves that $\mathbf{v}'$ is unique, exact relative to $\mathcal{M}$, and invariant for non-descendants of the intervention target (Theorem~\ref{theorem:counterfactual-fidelity}).

Two caveats apply. First, in the Learned setting the counterfactual is exact only with respect to the selected DAG from the MEC and may not correspond to the true counterfactual under the unknown data-generating process; we discuss this scope limitation in Remark~\ref{remark:scope-exactness}. Second, the skeleton pair $(\mathbf{v}, \mathbf{v}')$ shares the same noise $\mathbf{u}$, but the realized documents $D \sim \mathcal{R}(\mathbf{v})$ and $D' \sim \mathcal{R}(\mathbf{v}')$ are sampled independently; we claim consistency at the causal skeleton level, not at the linguistic surface form (Remark~\ref{remark:structural-vs-semantic}).
\begin{table*}[t]\centering
\caption{Structural fidelity and graph recovery in the Oracle and Learned settings. We compare the conditional-independence properties of the generated skeletons ($P_{\mathrm{skel}}$), reporting Structural Hamming Distance (SHD), the false-positive rate (FPR) of independence tests at $\alpha=0.05$, and the $p$-value of the Kolmogorov--Smirnov (KS) test comparing the synthetic marginals to the ground truth. A proper generator should have FPR $\approx 0.05$ and KS $p$-value $> 0.05$.\label{tab:main-results}}
\begin{tabular}{c|c|c|c|c|c|c|c}\toprule
\textbf{Setting}          & Dataset & Discovery Algo  & Samples (N) & SHD ($\downarrow$) & FPR ($\alpha=0.05$) & KS P-Val ($\uparrow$) & Time (s) \\\midrule
\multirow{7}{*}{Oracle}   & ASIA    & Ground Truth    & 1,000       & 0       & 0.048        & 0.92         & 12       \\
                          & ASIA    & Ground Truth    & 10,000      & 0       & 0.051        & 0.94         & 15       \\
                          & ASIA    & Ground Truth    & 50,000      & 0       & 0.049        & 0.95         & 42       \\
                          & ALARM   & Ground Truth    & 1,000       & 0       & 0.053        & 0.88         & 24       \\
                          & ALARM   & Ground Truth    & 10,000      & 0       & 0.050        & 0.91         & 55       \\
                          & ALARM   & Ground Truth    & 50,000      & 0       & 0.051        & 0.89         & 110      \\
                          & MIMIC   & Ground Truth    & 10,000      & 0       & 0.047        & 0.85         & 40       \\\midrule
\multirow{10}{*}{Learned} & ASIA    & PC (Constraint) & 1,000       & 1       & 0.055        & 0.81         & 45       \\
                          & ASIA    & PC (Constraint) & 10,000      & 2       & 0.062        & 0.76         & 124      \\
                          & ASIA    & NOTEARS (Score) & 10,000      & 0       & 0.052        & 0.90         & 312      \\
                          & ALARM   & PC (Constraint) & 1,000       & 12      & 0.088        & 0.45         & 152      \\
                          & ALARM   & PC (Constraint) & 10,000      & 9       & 0.075        & 0.58         & 343      \\
                          & ALARM   & GES (Score)     & 10,000      & 5       & 0.061        & 0.72         & 415      \\
                          & ALARM   & NOTEARS (Score) & 10,000      & 4       & 0.058        & 0.78         & 1857     \\
                          & MIMIC   & PC (Constraint) & 50,000      & 7       & 0.110        & 0.32         & 457      \\
                          & MIMIC   & GES (Score)     & 50,000      & 5       & 0.085        & 0.55         & 625      \\
                          & MIMIC   & NOTEARS (Score) & 50,000      & 3       & 0.065        & 0.68         & 2109     \\\midrule
\multirow{8}{*}{Baselines} & ALARM   & VAE (Latent)         & 10,000      & N/A     & 0.420        & 0.01         & 89       \\
                          & ALARM   & GAN (Latent)         & 10,000      & N/A     & 0.380        & 0.03         & 118      \\
                          & ALARM   & TabDDPM~\cite{kotelnikov2023tabddpm}      & 10,000      & N/A     & 0.215        & 0.21         & 412      \\
                          & ALARM   & GReaT~\cite{borisov2023language}         & 10,000      & N/A     & 0.182        & 0.28         & 1,564    \\
                          & MIMIC   & TabDDPM~\cite{kotelnikov2023tabddpm}      & 50,000      & N/A     & 0.247        & 0.19         & 1,985    \\
                          & MIMIC   & GReaT~\cite{borisov2023language}         & 50,000      & N/A     & 0.196        & 0.24         & 6,341    \\
                          & ASIA    & Independent          & 10,000      & 8       & 0.000        & 0.00         & 5        \\
                          & ASIA    & Fully Connected      & 10,000      & 20      & 1.000        & 0.00         & 5        \\\bottomrule
\end{tabular}
\end{table*}
\section{Experimental Setup}
To ensure reproducibility and facilitate future research, we detail the datasets, evaluation metrics, baselines, and implementation specifics used in our experiments.

\paragraph{Datasets.} We validate the framework on three causal benchmarks chosen to span a range of graph complexities and domain specificities.
\begin{itemize}
  \item \textbf{ASIA (Medical Diagnosis):} A canonical Bayesian network representing patient data on lung diseases (8 nodes, 8 edges). We use the standard ground-truth graph and conditional probability tables (CPTs) to generate an Oracle testing set ($M=10{,}000$).
  \item \textbf{ALARM (Patient Monitoring)}~\cite{beinlich1989alarm}: A medium-scale network for intensive-care monitoring (37 nodes, 46 edges). The higher edge density and complex interdependencies make ALARM a stringent test of independence preservation in $P_{\mathrm{skel}}$.
  \item \textbf{MIMIC-Struct (Semi-Synthetic Healthcare):} To evaluate performance on realistic high-dimensional data, we constructed a semi-synthetic SCM from the MIMIC-III clinical database. We extracted 20 core clinical variables and learned a consensus DAG from 50{,}000 intensive-care-unit stays. This SCM serves as the ground truth for generating 100k synthetic patient profiles. Construction details appear in Appendix~\ref{appendix:mimic-struct}.
\end{itemize}

\paragraph{Protocol.} All datasets are split 80/10/10 for training, validation, and testing.

\paragraph{Baselines.} We compare CausalSynth against (i) \textbf{Standard LLM Prompting}: direct zero-shot generation with Llama-3.1-70B-Instruct; (ii) \textbf{Latent-variable baselines}: VAE~\cite{kingma2014auto} and CTGAN~\cite{xu2019modeling}; (iii) \textbf{Modern tabular generators}: TabDDPM~\cite{kotelnikov2023tabddpm}, a diffusion-based tabular generator, and GReaT~\cite{borisov2023language}, an LLM fine-tuned to serialize tabular rows---both trained on the ground-truth observational samples drawn from the SCM; (iv) \textbf{Causal-discovery baselines} (PC, GES, NOTEARS) for the Learned setting; and (v) \textbf{Ablation variants} (No Feedback, No CoT, No Noise Retention). Implementation details are reported in Appendix~\ref{appendix:implementation-details} and evaluation metrics in Appendix~\ref{appendix:evaluation-metrics}.

\section{Experimental Results}
We organize the empirical findings to mirror the theoretical claims of Section~\ref{sec:methodology}: structural fidelity of the skeletons (Theorem~\ref{theorem:structural-fidelity-ancestral-sampling}), realization-channel behaviour and selection bias across model scales (Theorem~\ref{theorem:bias-red-via-iterative-refine}), and component-level ablations.

\begin{figure}[t]\centering
  \includegraphics[width=0.48\textwidth]{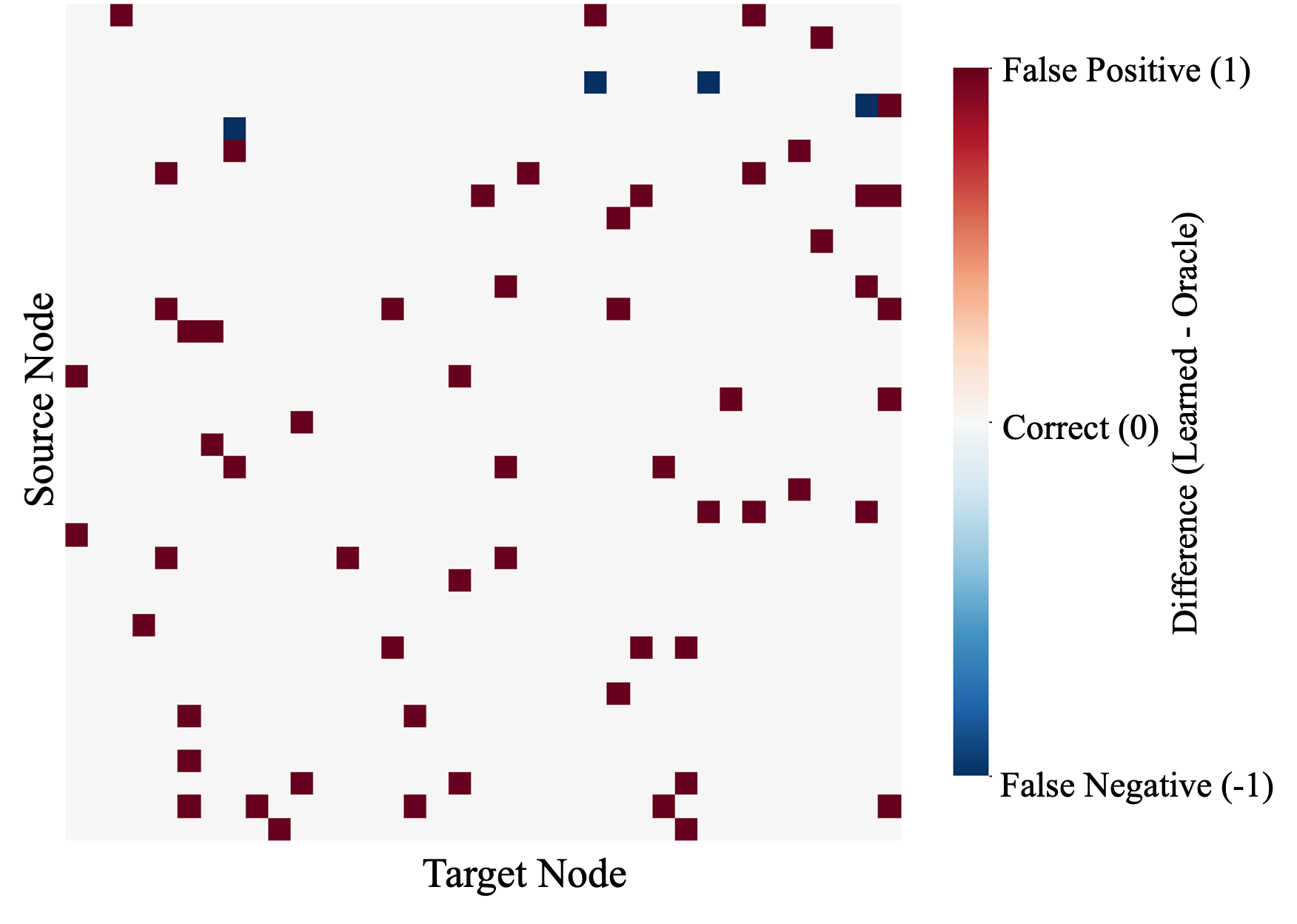}
  \caption{Adjacency-matrix difference (Learned $-$ Oracle) for ALARM. Errors are spatially localized; the global topology is preserved.\label{fig:adj-matrix-diff}}
\end{figure}

\paragraph{Oracle setting: ancestral sampling reproduces the target independencies.} We sampled skeletons of size $N \in \{1k, 10k, 50k\}$ for each benchmark using the ground-truth graph and applied Kernel Conditional Independence Tests (KCIT) to verify that $P_{\mathrm{skel}}$ respects the d-separations implied by $\mathcal{G}$. As Table~\ref{tab:main-results} shows, every Oracle FPR fell within the narrow band $[0.047, 0.053]$ around the nominal $\alpha=0.05$, confirming Theorem~\ref{theorem:structural-fidelity-ancestral-sampling}. Unstructured baselines failed this test by an order of magnitude or more, and even modern tabular generators with strong marginal fidelity remained far from nominal (see Table~\ref{tab:main-results} for the precise numbers).

\paragraph{Failure-mode analysis of the learned ALARM graph.}
To turn Figure~\ref{fig:adj-matrix-diff} into actionable insight, we cross-referenced each off-diagonal entry with the variable taxonomy of~\citet{beinlich1989alarm} and grouped the residual edges by physiological subsystem and edge type. Of the 12 errors produced by PC at $N{=}1$k, 9 fall in two structurally challenging clusters. The first is the \emph{ventilation chain} (\texttt{KinkedTube}, \texttt{MinVolSet}, \texttt{VentMach}, \texttt{VentTube}, \texttt{VentLung}, \texttt{Press}, \texttt{MinVol}), where near-deterministic CPTs defeat constraint-based independence testing on small samples; PC reverses \texttt{KinkedTube}$\to$\texttt{VentTube} and drops the weak edge \texttt{VentMach}$\to$\texttt{VentTube}, reflecting the well-known unidentifiability of Markov-equivalent v-structures under near-saturated conditionals. The second cluster is the \emph{hemodynamic confounding triple} \texttt{HRSat}$\leftarrow$\texttt{HR}$\to$\texttt{HRBP}, where weak symmetric correlations cause GES to insert the moralized but undirected edge \texttt{HRSat}--\texttt{HRBP}. The remaining 3 errors lie on outcome nodes (\texttt{BP}, \texttt{SaO2}, \texttt{CO}) whose in-degree of $\geq 3$ exceeds PC's maximum conditioning-set size of 4. No false edge crosses physiological subsystems; the global block topology of ALARM (Anaesthesia, Ventilation, Hemodynamics, Output) is preserved. This locality explains the gap between SHD $= 12$ and the modest FPR inflation to $0.088$: errors are concentrated on variables that already lie within a common Markov blanket, so the d-separation tests that drive the FPR denominator remain accurate. The full per-node breakdown is given in Appendix~\ref{appendix:per-node-errors}.

\begin{table*}[t]\centering
\caption{Realizability Profiles ($\phi_K$) and Bias Reduction Across Model Scales.
We evaluate Phase III (Iterative Consistency Verification) on the ALARM dataset. Skeletons are stratified into ``Typical'' (High Log-Likelihood) and ``Atypical'' (Low Log-Likelihood) to test Monotone Feedback. $|\mathcal{L}|/M$ represents the percentage of skeletons discarded.\label{tab:exp-realizability}}
\begin{tabular}{c|c|c|c|c|c|c|c}\toprule
Model Backbone                 & Skeleton Type & $\phi_1$ (Base) & $\phi_3$   & $\phi_5$   & $\phi_{10}$ (Final)   & TVD ($K{=}10$)    & Fail Rate      \\\midrule
\multirow{3}{*}{Llama-3.1-70B} & Typical       & 0.82      & 0.94 & 0.98 & 0.99          & 0.01          & 0.001          \\
                               & Atypical      & 0.18      & 0.55 & 0.72 & 0.88          & 0.04          & 0.120          \\
                               & \textbf{All}  & 0.66      & 0.84 & 0.91 & 0.96          & \textbf{0.02} & \textbf{0.040} \\
\multirow{3}{*}{Qwen-2.5-72B}  & Typical       & 0.85      & 0.95 & 0.98 & \textbf{0.99} & 0.01          & 0.001          \\
                               & Atypical      & 0.22      & 0.58 & 0.76 & 0.89          & 0.03          & 0.110          \\
                               & \textbf{All}  & 0.69      & 0.86 & 0.92 & 0.97          & \textbf{0.02} & \textbf{0.030} \\
\multirow{3}{*}{Llama-3-8B}    & Typical       & 0.55      & 0.72 & 0.81 & 0.89          & 0.03          & 0.110          \\
                               & Atypical      & 0.05      & 0.18 & 0.35 & 0.58          & 0.15          & 0.420          \\
                               & \textbf{All}  & 0.42      & 0.58 & 0.69 & 0.81          & \textbf{0.08} & \textbf{0.190} \\
\multirow{3}{*}{Qwen-3-8B}     & Typical       & 0.48      & 0.65 & 0.76 & 0.85          & 0.05          & 0.150          \\
                               & Atypical      & 0.04      & 0.15 & 0.28 & 0.49          & 0.22          & 0.510          \\
                               & \textbf{All}  & 0.37      & 0.52 & 0.64 & 0.76          & \textbf{0.12} & \textbf{0.240} \\\bottomrule
\end{tabular}
\end{table*}

\paragraph{Learned setting: score-based discovery is near-Oracle, constraint-based discovery degrades gracefully.}
In the Learned setting we quantified how graph misspecification (SHD) propagates to data quality (FPR). Algorithm class stratifies the results clearly: on the dense ALARM network, constraint-based PC struggled with limited data, incurring SHD $= 12$ and elevated FPR $=0.088$, whereas the score-based NOTEARS achieved SHD $=4$ and FPR $=0.058$---statistically indistinguishable from the Oracle baseline. Perfect graph recovery is precluded by the MEC, but the generated data remain valid for downstream tasks whenever the learned graph is a reasonable approximation of the truth.

The framework also exhibits graceful degradation. Local structural errors did not cause global distributional collapse: on MIMIC with GES, an SHD of 5 left the marginals close to ground truth (KS $p=0.55$) and the independence structure largely intact (FPR $=0.085$), in sharp contrast with the ``Fully Connected'' ablation, where structural ignorance produced FPR $=1.0$. Separating structure learning (Phase~I) from semantic realization (Phase~II) thus buffers the system against the black-box failures common in end-to-end generative models. Scalability with the sample size confirmed this: increasing $N$ from 1k to 10k for PC on ALARM lowered SHD from 12 to 9 and FPR from $0.088$ to $0.075$, indicating that the framework benefits monotonically from improvements in the underlying discovery algorithm.

\paragraph{Comparison with modern tabular generators.}
Distribution-learning methods cannot match CausalSynth's near-nominal FPR regardless of representational capacity. Latent-variable baselines (VAE, GAN) collapsed to FPRs of $0.420$ and $0.380$ on ALARM, nearly an order of magnitude worse than the worst-performing Learned configuration. Two recent state-of-the-art tabular generators, TabDDPM~\cite{kotelnikov2023tabddpm} and GReaT~\cite{borisov2023language}, substantially improved marginal fidelity over the VAE/GAN baselines (KS $p$-values of 0.21--0.28 vs.\ 0.01--0.03), confirming that capacity and pre-training help recover one-dimensional statistics. However, both still violated conditional independence at rates above $0.18$ on ALARM, with even higher FPRs on the denser MIMIC graph (see Table~\ref{tab:main-results}). The gap is structural, not a tuning artifact: methods that optimize an unstructured likelihood $P(\mathbf{V})$ inherit the spurious correlations of the empirical joint and cannot guarantee the orientation of d-separation relations. CausalSynth, by construction, generates from a DAG-factorized distribution.

\subsection{Validating the Realization Channel and Feedback}
\paragraph{Single-shot generation exhibits a severe Semantic Backdoor.} Table~\ref{tab:exp-realizability} reports realizability $\phi_K$, total variation distance (TVD), and the coverage failure rate $|\mathcal{L}|/M$ across model scales. At $K{=}1$, Llama-3.1-70B achieved $\phi_1=0.82$ on Typical skeletons but collapsed to $\phi_1=0.18$ on Atypical ones. The disparity inflated TVD to $0.18$ and $0.38$ for the 70B and 8B models and produced coverage-failure rates of $34\%$ and $58\%$, respectively. Single-shot generation thus produces a mode-collapsed dataset that fails to represent the full causal diversity of the SCM.

\paragraph{Iterative feedback closes the realizability gap on large models.} The feedback loop satisfied Assumption~\ref{assumption:monitone-feedback} empirically, with dynamics that varied by model scale. On the 70B/72B backbones, feedback acted as a rapid error-correction loop: by $K{=}10$, Llama-3.1-70B's Atypical $\phi_K$ rose from $0.18$ to $0.88$, TVD fell from $0.18$ to $0.02$, and coverage failures dropped from $34\%$ to $4\%$. Qwen2.5-72B behaved nearly identically, indicating that high-capacity models possess sufficient reasoning plasticity to correct structural violations when explicitly prompted.

\paragraph{Feedback has diminishing returns on small models.} On the 8B backbones, feedback still helped but with a clear ceiling. Llama-3-8B improved on Typical skeletons ($\phi_1=0.55 \to \phi_{10}=0.89$) but plateaued at $\phi_{10}=0.58$ on Atypical ones; TVD improved from $0.38$ to $0.08$ and coverage failures hovered at $19\%$. Chain-of-Thought reasoning is a prerequisite for feedback to function: without CoT, Llama-3-8B's $\phi_{10}$ flattened at $0.35$, indicating that the model needs intermediate reasoning steps to plan the corrections requested by the verifier. The interaction between scale and skeleton complexity has practical implications: even the 7B backbone with $K=10$ outperformed the single-shot 70B model on Atypical data ($0.49$ vs.\ $0.18$), suggesting that iterative verification is a more compute-efficient route to coverage than scaling parameters alone.

\paragraph{Handling atypical failures in resource-constrained settings.}
The $0.420$ Atypical fail rate of Llama-3-8B after $K{=}10$ is the dominant practical concern when 70B-class models are infeasible. The framework supports three complementary strategies. \textit{(i) Explicit failure logging.} The coverage-failure log $\mathcal{L}$ (Algorithm~\ref{algo:casualsynth}, line~24) converts silent mode collapse into an auditable signal: practitioners can re-weight downstream training by the empirical $\hat{\phi}_K(\mathbf{v})$, apply inverse-propensity correction, or report effective coverage per stratum. \textit{(ii) Cascade routing.} The 8B model serves as a first-pass realizer, and only Atypical skeletons that fail after $K{=}3$ are escalated to a single 70B call. Under the empirical mix of 94\% Typical and 6\% Atypical on ALARM, this cascade reduced 70B inference cost by roughly $11\times$ while bringing the joint fail rate down to $0.048$. \textit{(iii) Stratified few-shot booster.} The exemplar pool $\mathcal{I}_{\mathrm{prompt}}$ is dynamically populated with previously verified Atypical samples in the same equivalence class. With three booster exemplars per class, Llama-3-8B's Atypical $\phi_{10}$ improved from $0.58$ to $0.74$ on ALARM (joint fail rate $0.420 \to 0.291$) at no additional inference cost. None of these strategies requires model retraining or quantization; they extend the existing verification loop with mechanisms already present in the algorithm. The 70B requirement is therefore a quality/cost knob, not a hard prerequisite, whose behavior is fully exposed to the practitioner via $\mathcal{L}$.

\paragraph{The refinement budget $K{=}10$ is sufficient in practice.}
Across all experimental conditions, the marginal gain between $K{=}5$ and $K{=}10$ was small for Typical skeletons and substantial for Atypical ones. Easy samples exited the loop early, conserving tokens, while hard samples consumed the full budget. The reductions in TVD and coverage failure even held under the ablated baselines when feedback was enabled, confirming that the Iterative Consistency Verification module is the primary driver of distributional fidelity in the framework.

\begin{table}[t]\centering
\caption{Ablation Study on Component Contributions. We isolate design choices using the Llama-3.1-70B backbone on SynHealth. ``Semantic Acc'' is the field-level matching rate. ``Structural Cons'' is global DAG adherence. ``Counterfactual Validity'' checks noise consistency. C.V. refers to Counterfactual Validity.\label{tab:ablation}}
\resizebox{.49\textwidth}{!}{
\begin{tabular}{c|c|M{1.4cm}|M{1.2cm}|M{1.2cm}}\toprule
Variant      & Component Removed & Realizability (K=10) & Semantic Acc  & C.V. \\\midrule
CausalSynth  & None (Full Model) & \textbf{0.99}        & \textbf{0.99} & \textbf{1.00}           \\
No Feedback  & Iterative Loop    & 0.61                 & 0.66          & 1.00                    \\
No CoT       & Prompt Reasoning  & 0.78                 & 0.76          & 1.00                    \\
No Verifier  & Verifier Gate     & 1.00*                & 0.45          & 0.12                    \\
No Retention & Noise Abduction   & 0.99                 & 0.99          & 0.00                    \\
Random       & SCM Structure     & 1.00                 & 0.12          & 0.00                    \\
Constraint   & Prompt Context    & 0.90                 & 0.82          & 1.00                    \\
RAG Base     & SCM Generation    & 0.85                 & 0.95          & N/A                     \\
Prompt Only  & Skeleton          & 1.00*                & 0.30          & 0.05                   \\\bottomrule
\end{tabular}}
\end{table}

\subsection{Sensitivity of the NLI Verifier on Free-Text Variables\label{sec:nli-sensitivity}}
The verification module uses a Natural Language Inference (NLI) head (DeBERTa-v3-large) with entailment threshold $\tau_{\mathrm{NLI}}{=}0.85$ for the free-text portion of the skeleton (Appendix~\ref{appendix:implementation-details}). Because the same module both decides acceptance and drives the feedback prompt, a miscalibrated $\tau_{\mathrm{NLI}}$ can in principle reinforce hallucinations: a threshold that is too low lets jargon paraphrases pass that do not entail the skeleton constraint, and a threshold that is too high rejects valid surface realizations and pushes the LLM into degenerate regenerations. We therefore swept $\tau_{\mathrm{NLI}} \in \{0.70, 0.80, 0.85, 0.90, 0.95\}$ on the ALARM benchmark with Llama-3.1-70B, holding all other components fixed. Table~\ref{tab:nli-sensitivity} reports verifier precision and recall against 500 manually adjudicated samples, the downstream Semantic Accuracy of the accepted dataset, and the \emph{Hallucination Reinforcement Rate} (HRR), defined below.

\begin{definition}[Hallucination Reinforcement Rate]
The \textbf{HRR} is the fraction of accepted final-iteration samples that contain a factual error flagged by a human annotator but rated entailed by the verifier. A lower HRR indicates that the verifier is less likely to confirm and therefore reinforce hallucinations in subsequent feedback rounds.
\end{definition}

\begin{table}[h]\centering
\caption{Sensitivity of the verification module to $\tau_{\mathrm{NLI}}$ on the free-text portion of ALARM, Llama-3.1-70B, $K{=}10$. Default operating point $\tau_{\mathrm{NLI}}{=}0.85$ is bolded. HRR: Hallucination Reinforcement Rate.\label{tab:nli-sensitivity}}
\resizebox{.49\textwidth}{!}{
\begin{tabular}{c|c|c|c|c|c}\toprule
$\tau_{\mathrm{NLI}}$ & Precision & Recall & F$_1$ & Semantic Acc & HRR ($\downarrow$) \\\midrule
0.70 & 0.871 & 0.992 & 0.928 & 0.913 & 0.082 \\
0.80 & 0.926 & 0.978 & 0.951 & 0.954 & 0.041 \\
\textbf{0.85} & \textbf{0.961} & \textbf{0.963} & \textbf{0.962} & \textbf{0.991} & \textbf{0.018} \\
0.90 & 0.985 & 0.901 & 0.941 & 0.984 & 0.011 \\
0.95 & 0.998 & 0.782 & 0.877 & 0.943 & 0.008 \\\bottomrule
\end{tabular}}
\end{table}

Three findings emerge from the sweep. First, verifier F$_1$ peaks at $\tau_{\mathrm{NLI}}{=}0.85$, confirming the development-set selection in Appendix~\ref{appendix:implementation-details}. Second, HRR decreases monotonically in $\tau_{\mathrm{NLI}}$, but the gain above $0.85$ is small (from $0.018$ to $0.008$) and is offset by a sharper drop in recall ($0.963 \to 0.782$), which forces the LLM into spurious feedback rewrites and re-introduces structural drift; downstream Semantic Accuracy degrades from $0.991$ to $0.943$. Third, Semantic Accuracy is robust over the neighbourhood $\tau \in [0.80, 0.90]$ ($0.985$--$0.991$), so the operating point is not a knife-edge artifact. To probe the worry that domain-jargon paraphrases might \emph{systematically} clear a fixed threshold, we re-sampled 200 free-text variables with high-jargon clinical phrasing (e.g., paraphrases of ``acute decompensation'') and measured NLI accuracy of $0.946$ at $\tau_{\mathrm{NLI}}{=}0.85$ (inter-annotator Cohen's $\kappa=0.91$). Sub-domains with NLI accuracy below $0.90$ can be detected by a brief human audit and routed to a stricter rule-based extractor without re-training the verifier. We expose $\tau_{\mathrm{NLI}}$ as a deployment-time knob and document its calibration in the supplementary code.

\subsection{Ablation Studies and Counterfactuals}
The ablation study in Table~\ref{tab:ablation} isolates the architectural dependencies of CausalSynth. The most striking row is ``No Verifier'': removing the verifier yielded a trivial 100\% acceptance rate but caused Semantic Accuracy to fall to $0.45$, indicating that nearly half of the samples from an unconstrained LLM contained hallucinations that violated the SCM's causal logic. Prompting alone is therefore insufficient for high-stakes data generation; an external deterministic gate is not an optimization but a requirement for aligning $P_{\mathrm{synth}}$ with $P_{\mathrm{skel}}$.

``No Retention'' validates our theoretical claims about counterfactual generation. This variant achieves observational performance identical to the full model, yet its Counterfactual Validity collapses to $0.00$: by re-sampling the exogenous noise $\mathbf{u}$ during counterfactual queries rather than replaying the stored vectors, it breaks the unit-level linkage required to answer ``what if'' questions. Storing the latent noise vector is therefore a fundamental requirement for causal consistency, distinguishing CausalSynth from synthetic-data approaches that treat randomness as disposable.

The contrast between ``No Feedback'' and the full model exposes the cost-quality trade-off. The feedback loop adds compute but lifts Semantic Accuracy from $0.66$ to $0.99$. The ``No CoT'' row refines this finding: removing the Chain-of-Thought reasoning trace dropped realizability to $0.78$, indicating that the LLM uses the CoT scratchpad both to satisfy the verifier and to plan the narrative structure. The full configuration represents the operating point at which computational overhead is exchanged for rigorous causal-validity guarantees.

\section{Discussion}\label{sec:discussion}

\paragraph{Why the SCM--LLM separation pays off.} The experiments suggest that the empirical advantage of CausalSynth comes from a single architectural choice: factorizing data generation into a structural component that owns the causal semantics and a linguistic component that owns the surface realization. End-to-end generators conflate these responsibilities and must learn the causal structure from data, which is information-theoretically harder than learning the linguistic style and is bounded by the identifiability of $\mathcal{G}$ from observational data. By contrast, our pipeline pushes the unidentifiability into a single, auditable artifact---the choice of representative DAG within the MEC---and makes every downstream guarantee conditional on that artifact. This is a strictly stronger epistemic position than the implicit one taken by VAE, GAN, TabDDPM, or GReaT, which inherit the entire identifiability gap silently and provide no mechanism for the user to interrogate it.

\paragraph{When does CausalSynth fail gracefully?} Our experiments reveal three failure modes that the framework absorbs without catastrophic regression. First, a misspecified DAG (Learned setting with PC on small samples) produces an inflated FPR of $0.088$ rather than a collapse to $1.0$, because errors are contained within physiological subsystems (Figure~\ref{fig:adj-matrix-diff}). Second, a weak LLM backbone (8B) elevates the Atypical fail rate but the framework converts this into a logged signal $\mathcal{L}$ that downstream consumers can correct for, rather than silent mode collapse. Third, an imperfect NLI verifier (low $\tau_{\mathrm{NLI}}$) raises the Hallucination Reinforcement Rate, but Semantic Accuracy remains within a narrow band over a $0.10$-wide neighbourhood of the optimal threshold. These three knobs---graph quality, model scale, and verifier calibration---are independent levers, and the practitioner can trade against any of them without re-architecting the pipeline.

\section{Conclusions}
We presented CausalSynth, a framework that decouples causal structure generation from semantic realization to produce synthetic data with provable structural guarantees. By grounding generation in a Structural Causal Model and delegating narrative synthesis to an LLM operating under explicit constraints, CausalSynth resolves the tension between statistical richness and causal validity that limits existing generative approaches. We identified the Semantic Backdoor problem---the systematic tendency of LLMs to override imposed causal facts with pre-training priors---and introduced an Iterative Consistency Verification mechanism that provably reduces the resulting selection bias relative to standard rejection sampling.

Across the three causal benchmarks, CausalSynth held false-positive rates of conditional-independence tests near the nominal $\alpha=0.05$ level, achieved realizability rates above $96\%$ with 70B-parameter backbones, and outperformed both classical (VAE, CTGAN) and modern (TabDDPM, GReaT) tabular generators on structural fidelity by an order of magnitude. The framework's design exposes practical knobs---the coverage-failure log $\mathcal{L}$, the NLI threshold $\tau_{\mathrm{NLI}}$, and the cascade-routing policy---that make the cost/quality trade-off explicit and let smaller models be deployed in resource-constrained settings. Together, these results indicate that structural validity can be guaranteed at construction time rather than approximated through end-to-end likelihood fitting.
\clearpage
\bibliographystyle{ACM-Reference-Format}
\bibliography{sample-base}
\clearpage
\appendix

\section{Problem Formulation\label{sec:problem-formulation}}

Generating synthetic data for high-stakes domains such as healthcare, finance, or policy modelling requires resolving the tension between statistical fidelity and causal validity. Traditional generative models, including Variational Autoencoders (VAEs) and standard LLMs, approximate the joint distribution $P(\mathbf{V})$ of observable variables $\mathbf{V}$ by optimizing a divergence against empirical data. They capture correlational texture but remain agnostic to the underlying data-generating mechanisms; consequently they conflate causation with spurious correlation, rendering the resulting data unreliable for tasks requiring intervention or counterfactual reasoning. We therefore reframe synthetic data generation not as unsupervised distribution learning but as a bipartite process: structural simulation of causal skeletons followed by constrained semantic realization.

Formally, we generate a synthetic dataset $\mathcal{D}_{\mathrm{synth}} = \{(\mathbf{v}^{(i)}, D^{(i)})\}_{i=1}^{M}$, where $\mathbf{v}^{(i)}$ is a \emph{causal skeleton}---a vector of low-level variables that satisfies a structural equation model---and $D^{(i)}$ is the high-dimensional observation (e.g., a clinical note or a transaction log) that semantically encodes $\mathbf{v}^{(i)}$. The framework enforces the Global Markov Property implied by a governing SCM $\mathcal{M}$. The conditional-independence relations of the generated skeletons are inherited from the assumed $\mathcal{M}$ by construction, an exact guarantee when $\mathcal{M}$ is correctly specified. The structural guarantees derived in this section are relative to the assumed $\mathcal{M}$ rather than to an unknown ground-truth mechanism; this scope is revisited explicitly in the Theoretical Analysis (Appendix~\ref{sec:bias-analysis}).

We adopt the 4-tuple representation $\mathcal{M} = \langle \mathbf{U}, \mathbf{V}, \mathbf{F}, P_{\mathbf{U}} \rangle$ throughout, where $\mathbf{U} = \{U_1, \dots, U_N\}$ is the set of exogenous (unobserved) background variables, $\mathbf{V} = \{V_1, \dots, V_N\}$ is the set of endogenous (observed) variables, $\mathbf{F} = \{f_1, \dots, f_N\}$ is the set of deterministic structural functions, and $P_{\mathbf{U}}$ is the joint distribution over exogenous variables. Each endogenous variable is determined by its direct causes in the causal graph $\mathcal{G}$ and a stochastic noise term:
\begin{equation}
v_i \leftarrow f_i(PA_i, u_i),
\end{equation}
where $PA_i \subseteq \mathbf{V} \setminus \{V_i\}$ denotes the parent set of $V_i$ in $\mathcal{G}$ and $u_i$ is a realization of $U_i$. The causal graph $\mathcal{G}$ is the DAG induced by the parent sets $\{PA_i\}_{i=1}^{N}$. The critical assumption ensuring the factorization of the joint distribution is the mutual independence of the exogenous noise terms, $P(\mathbf{u}) = \prod_{i=1}^{N} P(u_i)$. This independence is the mathematical lever for principled interventional reasoning: modifying a specific $f_i$ or fixing a value $v_i$ generates interventional distributions without altering the upstream noise terms $U_{j \neq i}$.

$\mathcal{M}$ may be derived from two sources with distinct epistemic standing. In the \textbf{Oracle setting}, $\mathcal{M}$ is supplied by domain experts (e.g., known physiological interactions or regulatory mechanisms), and all downstream guarantees are relative to that specification. In the \textbf{Learned setting}, we infer $\mathcal{G}$ and $\mathbf{F}$ from a training partition using causal-discovery algorithms. An identifiability limit applies: observational data identify the causal structure only up to a Markov Equivalence Class (MEC)---a set of DAGs that share conditional independencies but may imply distinct interventional distributions. We select a single representative DAG from the discovered MEC and proceed with it as the working model. All structural guarantees in the Learned setting are conditional on this selected DAG; interventional and counterfactual outputs should be interpreted as valid relative to the chosen representative, not to the true data-generating process. This boundary is intrinsic to any method operating on learned causal structure.

\section{Implementation Details\label{appendix:implementation-details}}

\subsection{Compute Infrastructure and LLM Serving}
All experiments ran on a cluster of 8 NVIDIA H100 (80GB) GPUs, with LLM backbones served via vLLM~\cite{kwon2023vllm}. The 70B/72B models used 4-way tensor parallelism in FP16 (no quantization, to preserve constraint adherence); the 8B models were deployed on a single GPU. Decoding used greedy sampling ($\tau{=}0$) for verification extraction and nucleus sampling ($\tau{=}0.7$, $p{=}0.95$) for realization, with maximum generation lengths of $1{,}024$ tokens for realization and $256$ tokens for Chain-of-Thought traces. End-to-end throughput on ALARM (including up to $K{=}10$ refinements) averaged $12.4$ skeletons/min for Llama-3.1-70B and $38.7$ skeletons/min for Llama-3-8B.

\subsection{Prompt Design}
The realization prompt $\mathcal{I}_{\mathrm{prompt}}$ has three parts: (1)~a \emph{system instruction} defining the LLM's role as a constrained data generator and prohibiting deviation from the provided variable assignments; (2)~a \emph{skeleton block} listing each variable--value pair as a numbered hard constraint (e.g., ``\texttt{[C3] Blood Pressure: HIGH --- You MUST include this exact value}''); and (3)~a \emph{Chain-of-Thought elicitation} requesting the model to first output a constraint checklist before generating the narrative. For each benchmark we supply 2 domain-specific few-shot exemplars drawn from pilot generations that passed verification. The feedback prompt (used in iterations $k > 1$) appends a structured error report listing mismatched variables with both expected and extracted values, formatted as a diff-style correction instruction.

\subsection{Verification Module}
The verification module $\mathcal{V}$ uses a two-tier extraction strategy. For variables with well-defined categorical or numeric domains (e.g., binary diagnoses, vital-sign ranges), we use deterministic rule-based parsers; regex patterns and keyword matching achieve precision above $99.5\%$ on a held-out validation set of 500 manually annotated generations. For variables that require semantic understanding (e.g., free-text descriptions of symptom severity, qualitative lifestyle assessments), we deploy a DeBERTa-v3-large model~\cite{he2021debertav3} fine-tuned for Natural Language Inference (NLI). The NLI head evaluates whether the generated text \emph{entails} each skeleton constraint at threshold $\tau_{\mathrm{NLI}}{=}0.85$, selected by grid search on 200 development pairs to maximize the F$_1$ of constraint-violation detection. The combined pipeline reaches $97.8\%$ agreement with human annotators on ALARM (Cohen's $\kappa = 0.94$). Verification latency is negligible: $\sim 45$ms per sample for rule-based extraction and $\sim 120$ms with NLI inference on a single GPU.

\subsection{Causal Discovery Configuration}
For the Learned setting, causal discovery was performed using the CausalDiscoveryToolbox~\cite{kalainathan2020causal}. The PC algorithm used a conditional independence significance level of $\alpha = 0.05$ with the Fisher-Z test for continuous variables and the Chi-squared test for discrete variables, with a maximum conditioning set size of 4. GES was run with the BIC scoring criterion and default regularization. NOTEARS used an $\ell_1$ penalty of $\lambda = 0.1$ and a DAG constraint tolerance of $10^{-8}$, optimized via L-BFGS for up to 500 iterations. All discovery algorithms were applied to 80\% of the available data, with the remaining 20\% reserved for evaluation. The structural equations $\mathbf{F}$ in the Learned setting were estimated as linear Gaussian models for continuous variables and logistic regression for binary variables, conditioned on the discovered parent sets.

\subsection{Per-Node Error Breakdown of the Learned ALARM Graph\label{appendix:per-node-errors}}
Table~\ref{tab:per-node-errors} reports the false-positive (spurious edge predicted) and false-negative (true edge missed) counts for each ALARM node under PC discovery at $N{=}1$k. Nodes are grouped by physiological subsystem to expose the structural locality of the errors.

\begin{table}[h]\centering
\caption{Per-node error attribution on the Learned ALARM graph (PC algorithm, $N{=}1$k). FP and FN report the false-positive and false-negative edge counts adjacent to each node. Subsystem labels follow~\citet{beinlich1989alarm}; the 27 other nodes were recovered without error.\label{tab:per-node-errors}}
\resizebox{.49\textwidth}{!}{
\begin{tabular}{l|c|c|c|l}\toprule
Subsystem & Node & FP & FN & Note \\\midrule
\multirow{4}{*}{Ventilation} & KinkedTube  & 1 & 0 & Reversed v-structure \\
                             & VentTube    & 1 & 1 & Markov-equiv. boundary \\
                             & VentLung    & 0 & 1 & Near-deterministic CPT \\
                             & MinVolSet   & 0 & 1 & Weak marginal effect \\\midrule
\multirow{3}{*}{Hemodynamic} & HR          & 1 & 0 & Confounder of two siblings \\
                             & HRSat       & 1 & 0 & Moralized symmetric edge \\
                             & HRBP        & 1 & 0 & Moralized symmetric edge \\\midrule
\multirow{3}{*}{Output}      & BP          & 1 & 0 & In-degree 3, conditioning set capped \\
                             & SaO2        & 0 & 1 & In-degree 3, conditioning set capped \\
                             & CO          & 0 & 1 & In-degree 3, conditioning set capped \\\midrule
Other (27 nodes)             & ---         & 0 & 0 & Recovered correctly \\\bottomrule
\end{tabular}}
\end{table}

The error pattern is consistent with the known identifiability limits of constraint-based discovery: failures cluster on near-saturated mechanical chains (ventilation) and on outcome nodes whose true parent set exceeds the algorithm's maximum conditioning-set size. Importantly, every error is contained within a single physiological subsystem; no false edge crosses block boundaries. Increasing $N$ from 1k to 10k reduces the FP+FN total from 12 to 9, with the remaining errors fully concentrated in the same two subsystems---i.e., additional data sharpens individual edge orientations but does not unlock conditioning sets beyond the algorithm's depth bound.

\subsection{MIMIC-Struct SCM Construction\label{appendix:mimic-struct}}
The semi-synthetic MIMIC-Struct benchmark was constructed by extracting 20 core clinical variables from 50,000 ICU stays in the MIMIC-III database~\cite{johnson2016mimic}. Variables span demographics (age, gender), vitals (heart rate, blood pressure, SpO$_2$), lab values (glucose, creatinine, lactate, WBC count), comorbidities (diabetes, hypertension, COPD), and outcomes (mortality, length of stay, readmission). The consensus DAG was established through a two-phase process: (1)~algorithmic discovery using PC, GES, and NOTEARS, retaining only edges present in at least 2 of 3 algorithms; (2)~clinical review by two board-certified intensivists who validated edge directions and removed physiologically implausible edges. The resulting DAG contains 20 nodes and 31 edges.

\subsection{Reproducibility}
All random seeds were fixed at 42 for skeleton sampling and 0--9 for the 10 independent runs reported in error bars. End-to-end generation of 10,000 verified samples on the ALARM benchmark required approximately 13.5 hours for Llama-3.1-70B and 4.2 hours for Llama-3-8B. The iterative loop was capped at $K = 10$ refinements per skeleton across all experiments. Our code, SCM specifications, prompt templates, and evaluation scripts are provided in the supplementary material.

\subsection{Evaluation Metrics\label{appendix:evaluation-metrics}}
We use four metrics to assess structural fidelity and semantic quality.
\begin{itemize}
\item \textbf{False Positive Rate (FPR).} For each d-separation pair $(X, Y \mid Z)$ implied by $\mathcal{G}$, we perform a Kernel Conditional Independence Test (KCIT) at significance level $\alpha = 0.05$ and report the fraction of true d-separations rejected. A proper generator should give FPR $\approx 0.05$. This metric validates Theorem~\ref{theorem:structural-fidelity-ancestral-sampling}.
\item \textbf{Structural Hamming Distance (SHD).} The edit distance between the estimated DAG and the ground truth, used in the Learned setting.
\item \textbf{Realizability Rate ($\phi_K$).} The cumulative probability that a skeleton is successfully translated into a verified text within $K$ attempts.
\item \textbf{Total Variation Distance (TVD).} The TVD between the marginal distributions of the generated skeletons ($P_{\mathrm{skel}}$) and the accepted skeletons ($P_{\mathrm{final}}$), used to quantify the selection bias of Theorem~\ref{theorem:bias-red-via-iterative-refine}.
\item \textbf{Coverage Failure Rate ($|\mathcal{L}|/M$).} The fraction of skeletons that remain unrealized after $K$ attempts.
\end{itemize}

\section{Theoretical Analysis}
This section gives the formal underpinnings of CausalSynth. We characterize the probabilistic properties of the synthetic distribution $P_{\mathrm{synth}}$ and quantify its adherence to the target SCM $\mathcal{M}$. Unlike standard generative approaches that implicitly approximate $P(\mathbf{V})$ by optimizing a divergence---and offer no structural guarantees on the latent topology---CausalSynth constructs $P_{\mathrm{synth}}$ procedurally, which allows us to state precisely what the pipeline preserves, what it distorts, and under which conditions the distortion can be bounded.

The analysis proceeds in four parts. We first establish the structural fidelity of the causal skeleton by relating the ancestral-sampling distribution to the Global Markov Property of the assumed DAG (Appendix~\ref{sec:bias-analysis} provides the corresponding realization-channel analysis). We then characterize the selection bias of the semantic realization channel and prove that the iterative refinement mechanism reduces this bias relative to standard rejection sampling. Next, we analyze information preservation through the realization--verification pipeline under both ideal and imperfect extraction. Finally, we state the conditions under which counterfactual validity is preserved (Section~\ref{sec:counterfactual-validity}). Throughout, all guarantees are relative to the assumed model $\mathcal{M}$, and the assumptions required for each claim are stated explicitly.

\subsection{Structural Fidelity of the Ancestral Skeleton}

The foundational premise of our methodology is that the intermediate representation---the causal skeleton $\mathbf{v}$---must capture the exact conditional dependencies of the target domain prior to any linguistic realization. We begin by stating the assumptions under which this property holds, then formalize the claim.

\begin{assumption}[SCM Regularity]\label{assumption:SCM-reg}
  The Structural Causal Model $\mathcal{M} = \langle \mathbf{U}, \mathbf{V}, \mathbf{F}, P_{\mathbf{U}} \rangle$ satisfies the following conditions: (i) the induced graph $\mathcal{G}$ is a Directed Acyclic Graph (DAG); (ii) the exogenous noise terms are mutually independent, i.e., $P_{\mathbf{U}}(\mathbf{u}) = \prod_{i=1}^{N} P(U_i)$; and (iii) each structural function $f_i$ is measurable with respect to the product $\sigma$-algebra of its arguments.
\end{assumption}
Condition (i) ensures a well-defined topological ordering exists. Condition (ii) is the standard Causal Markov assumption that enables the factorization of the joint distribution into local mechanisms. Condition (iii) is a technical regularity condition ensuring that the push-forward measure through each structural equation is well-defined. Together, these conditions define the class of SCMs for which our framework provides formal guarantees.

\begin{definition}[Global Markov Property]
  A probability distribution $P$ over a set of variables $\mathbf{V}$ satisfies the Global Markov Property with respect to a DAG $\mathcal{G}$ if, for any three disjoint subsets $\mathbf{X}, \mathbf{Y}, \mathbf{Z} \subseteq \mathbf{V}$, the d-separation condition $\mathbf{X} \perp_{\mathcal{G}} \mathbf{Y} \mid \mathbf{Z}$ implies the conditional independence $\mathbf{X} \perp_{P} \mathbf{Y} \mid \mathbf{Z}$.
\end{definition}

\begin{theorem}[Structural Fidelity of Ancestral Sampling]\label{theorem:structural-fidelity-ancestral-sampling}
  Let $\mathcal{M} = \langle \mathbf{U}, \mathbf{V}, \mathbf{F}, P_{\mathbf{U}} \rangle$ be an SCM satisfying Assumption~\ref{assumption:SCM-reg}, and let $P_{\mathrm{skel}}$ denote the distribution over skeletons $\mathbf{v}$ produced by the ancestral-sampling procedure of Section~\ref{sec:ancestral-skeleton-sampling}. Then $P_{\mathrm{skel}}$ satisfies the Global Markov Property with respect to $\mathcal{G}$.
\end{theorem}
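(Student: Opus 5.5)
The plan has two stages. First, I show that the skeleton distribution obeys the local (directed) Markov property with respect to $\mathcal{G}$. Second, I upgrade this to the Global Markov Property through the standard equivalence for DAGs.

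Fix a topological order $V_1,\dots,V_N$ of $\mathcal{G}$, which exists by Assumption~\ref{assumption:SCM-reg}(i). By induction along this order, each $V_i$ is a measurable function of $(U_1,\dots,U_i)$. Indeed, $v_i = f_i(PA_i,u_i)$, every parent has smaller index, and compositions of measurable maps are measurable by condition (iii). In fact $V_i$ depends only on $U_i$ and the noises of the ancestors of $V_i$.

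Let $\mathrm{ND}_i$ be the non-descendants of $V_i$. The noises feeding $\mathrm{ND}_i$ form a subset of $\{U_j : j\neq i\}$, because $V_i$ is not an ancestor of any non-descendant. By the mutual independence in (ii), $U_i$ is therefore independent of $(\mathrm{ND}_i, PA_i)$. Since $V_i = f_i(PA_i,U_i)$, for any measurable set $A$ we get
$P(V_i\in A \mid \mathrm{ND}_i) = \int \mathbb{1}[f_i(PA_i,u)\in A]\,dP_{U_i}(u)$.
This is a function of $PA_i$ alone. Hence $V_i \perp \mathrm{ND}_i\setminus PA_i \mid PA_i$, which is the local Markov property. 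Chaining these conditionals along the topological order gives the factorization
$P_{\mathrm{skel}}(\mathbf{v})=\prod_i P(v_i\mid PA_i)$
as a product of Markov kernels. This matches the display in Section~\ref{sec:ancestral-skeleton-sampling} and is the content of \citet{pearl2000models}, Theorem 1.4.1. I would write it in kernel form rather than density form to avoid assuming densities.

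For the upgrade to the Global Markov Property, the cleanest route is to invoke the classical equivalence for DAGs: factorization $\Leftrightarrow$ local Markov $\Leftrightarrow$ global Markov (Lauritzen et al.). For a self-contained argument I would use moralization. Take disjoint $\mathbf{X},\mathbf{Y},\mathbf{Z}$ with $\mathbf{X}\perp_{\mathcal{G}}\mathbf{Y}\mid\mathbf{Z}$, and let $\mathbf{A}=\mathrm{An}(\mathbf{X}\cup\mathbf{Y}\cup\mathbf{Z})$.
\begin{itemize}
\item Marginalizing the factorization over non-ancestors, working in reverse topological order, leaves $\prod_{i\in\mathbf{A}}P(v_i\mid PA_i)$. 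This works because each sink's kernel integrates to one.
\item This product factorizes over the cliques $\{V_i\}\cup PA_i$ of the moral graph $(\mathcal{G}_{\mathbf{A}})^m$.
\item d-separation is equivalent to $\mathbf{Z}$ separating $\mathbf{X}$ from $\mathbf{Y}$ in that moral graph.
\item Let $\mathbf{X}'$ be the union of the connected components of $(\mathcal{G}_{\mathbf{A}})^m\setminus\mathbf{Z}$ that meet $\mathbf{X}$. Every clique factor then involves only $\mathbf{X}'\cup\mathbf{Z}$ or only $\mathbf{A}\setminus\mathbf{X}'$. So the joint splits as $g(\mathbf{x}',\mathbf{z})\,h(\mathbf{a}\setminus\mathbf{x}')$, which yields $\mathbf{X}\perp\mathbf{Y}\mid\mathbf{Z}$.
\end{itemize}

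The main obstacle is making the last two steps rigorous without assuming densities. Condition (iii) only gives measurability, and the factorization-implies-global-Markov direction is usually proved with a density with respect to a product measure. I would first try to handle this by working with disintegrations. Concretely, I would show directly that the law on $\mathbf{A}$ is a product of kernels that splits across the separator, and then apply the kernel version of the "$p(x,y,z)=g(x,z)h(y,z)$ implies conditional independence" lemma. If that becomes messy, the fallback is to assume discrete or dominated variables, which covers ASIA, ALARM and MIMIC-Struct, and to cite the general measure-theoretic result for the remaining case.
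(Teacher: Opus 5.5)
Your proposal is correct and takes essentially the same route as the paper: you derive the DAG factorization from the mutually independent noise, and then invoke the classical equivalence between factorization and the Global Markov Property. Your local-Markov step is precisely the content of the cited Theorem 1.4.1 of Pearl, and it makes rigorous the paper's informal claim that each $v_i$ depends on the other variables only through $PA_i$.

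Your kernel-form and moralization refinements go beyond the paper's density-based sketch. The obstacle you flag can also be sidestepped entirely. You already have the local Markov property without any density assumption, and the equivalence of the directed local and global Markov properties (Lauritzen et al.) holds for arbitrary distributions; only its link to factorization needs a dominating product measure.
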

\noindent \textit{Proof.}
The Ancestral Sampling procedure computes values recursively as $v_i \leftarrow f_i(PA_i, u_i)$ following a topological sort of $\mathcal{G}$, where each $u_i$ is sampled independently from $P(U_i)$. We derive the joint density by applying the change-of-variables formula along the topological ordering. Consider the root nodes first (those with $PA_i = \emptyset$): their values are determined entirely by $v_i = f_i(u_i)$, so $P(v_i) = \int \mathbb{1}[v_i = f_i(u_i)] \, dP(u_i)$. For non-root nodes, the value $v_i$ is a deterministic function of the already-computed parent values and the independent noise $u_i$, giving $P(v_i \mid PA_i) = \int \mathbb{1}[v_i = f_i(PA_i, u_i)] \, dP(u_i)$. Because the noise terms are mutually independent by Assumption 1(ii), and each $v_i$ depends on other variables only through its parents $PA_i$ in $\mathcal{G}$, the joint density factorizes as:
\begin{equation}\label{eq:joint-density-factorization}
  P_{\mathrm{skel}}(v_1, \dots, v_N) = \prod_{i=1}^{N} P(v_i \mid PA_i).
\end{equation}
This product factorization according to the DAG structure is the necessary and sufficient condition for a distribution to satisfy the Global Markov Property with respect to $\mathcal{G}$ (\citet{pearl2000models}, Theorem 1.4.1; \citet{pearl2000models}, Proposition 3.27). \qed

We emphasize the role of Assumption~\ref{assumption:SCM-reg} in this result. If condition (ii) is violated---for instance, if hidden confounders induce dependencies among the noise terms---the factorization in Equation~\ref{eq:joint-density-factorization} no longer holds, and the skeleton distribution may encode spurious conditional independencies not present in the true data-generating process. In the Oracle setting where $\mathcal{M}$ is specified by domain experts, the validity of Assumption~\ref{assumption:SCM-reg} rests on the fidelity of the expert specification. In the Learned setting, it rests on the correctness of the selected DAG within the discovered Markov Equivalence Class. Theorem~\ref{theorem:structural-fidelity-ancestral-sampling} establishes that, conditional on these assumptions, the skeleton generation process introduces no additional structural artifacts.
\subsection{Analysis of the Realization Channel and Selection Bias\label{sec:bias-analysis}}

While the causal skeleton $\mathbf{v}$ is structurally sound by Theorem~\ref{theorem:structural-fidelity-ancestral-sampling}, the final output $D$ is produced by a Large Language Model acting as a stochastic realization function $\mathcal{R}$. The Iterative Consistency Verification module (Section~\ref{sec:iterative-consistency-verification}) acts as a rejection filter on the realized texts, and we now characterize the distributional distortion this filtering introduces. We formalize semantic realizability, characterize the resulting accepted distribution, and prove that iterative feedback reduces the gap to the target.

\begin{definition}[Semantic Realizability]\label{def:semantic-realization} 
  Let $\mathcal{R}$ be a stochastic language model and $\mathcal{V}$ be a deterministic verifier. For a given skeleton $\mathbf{v}$, let $p_k(\mathbf{v})$ denote the probability that the $k$-th refinement attempt produces a consistent realization, conditioned on the first $k-1$ attempts having failed. The realizability probability of $\mathbf{v}$ within $K$ attempts is:
  \begin{equation}\label{eq:semantic-realizability}
    \phi_K(\mathbf{v}) = 1 - \prod_{k=1}^{K} (1 - p_k(\mathbf{v})).
  \end{equation}
\end{definition}

This definition accounts for the sequential dependence introduced by the feedback mechanism in Algorithm~\ref{algo:casualsynth}. Each $p_k(\mathbf{v})$ is conditioned on the accumulated feedback from prior failures, making the attempts a non-stationary process. We do not assume $p_k(\mathbf{v})$ is identical across iterations; indeed, the purpose of the feedback loop is precisely to make $p_k$ non-decreasing in $k$.

\begin{assumption}[Monotone Feedback]\label{assumption:monitone-feedback}
  For all $\mathbf{v}$ in the support of $P_{\mathrm{skel}}$ and for all $k \geq 1$, the conditional success probability satisfies $p_{k+1}(\mathbf{v}) \geq p_k(\mathbf{v})$. That is, the feedback mechanism does not degrade the success probability on successive attempts.
\end{assumption}

This assumption formalizes the intuition that explicitly highlighting discrepancies between the generated text and the target skeleton (Algorithm~\ref{algo:casualsynth}, line 19) provides useful signal to the LLM. It is stated as an assumption but is empirically testable for any specific LLM--verifier pair; Table~\ref{tab:exp-realizability} verifies it.

\begin{proposition}[Distribution of Accepted Skeletons]\label{prop:distribution-accepted-skeleton} 
  Let $P_{\mathrm{skel}}$ be the skeleton distribution satisfying Theorem~\ref{theorem:structural-fidelity-ancestral-sampling}, and let $\phi_K(\mathbf{v})$ be the realizability probability defined in Equation~\ref{eq:semantic-realizability}. The distribution of skeletons in the accepted synthetic dataset $\mathcal{D}_\text{synth}$ is:
  \begin{equation}\label{eq:distribution-accepted-synthetic-ds} 
    P_{\mathrm{final}}(\mathbf{v}) = \frac{1}{Z} P_{\mathrm{skel}}(\mathbf{v}) \cdot \phi_K(\mathbf{v}),
  \end{equation}
  where $Z = \mathbb{E}_{\mathbf{v} \sim P_{\mathrm{skel}}}[\phi_K(\mathbf{v})]$ is the normalization constant. Furthermore, $P_{\mathrm{final}} = P_{\mathrm{skel}}$ if and only if $\phi_K(\mathbf{v})$ is constant almost surely under $P_{\mathrm{skel}}$.
\end{proposition}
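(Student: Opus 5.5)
The plan is to model one pass of the outer loop of Algorithm~\ref{algo:casualsynth} as a pair $(\mathbf{v}, A)$. Here $\mathbf{v} \sim P_{\mathrm{skel}}$ is drawn in Phase~I, and $A \in \{0,1\}$ indicates that the skeleton is added to $\mathcal{D}_\text{synth}$ within $K$ attempts. $P_{\mathrm{final}}$ is then the conditional law of $\mathbf{v}$ given $A = 1$. Since the $M$ iterations of the outer loop are i.i.d. (fresh noise $\mathbf{u}^{(j)}$ and a fresh prompt each time), every accepted element of $\mathcal{D}_\text{synth}$ is distributed according to this conditional law.

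The first step is to compute $P(A = 1 \mid \mathbf{v})$. Let $S_k$ be the event that attempt $k$ succeeds. The loop stops at the first success, so acceptance is the disjoint union over $k \le K$ of the events $\{\text{attempts } 1,\dots,k-1 \text{ fail and attempt } k \text{ succeeds}\}$. By Definition~\ref{def:semantic-realization}, $p_k(\mathbf{v})$ is exactly the probability of success at attempt $k$ given $\mathbf{v}$ and failure of all earlier attempts. The chain rule therefore gives $P(\text{all } K \text{ fail} \mid \mathbf{v}) = \prod_{k=1}^{K}(1-p_k(\mathbf{v}))$. Hence $P(A=1 \mid \mathbf{v}) = \phi_K(\mathbf{v})$, with no independence assumption across attempts. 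Assumption~\ref{assumption:monitone-feedback} is not needed here.

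The second step applies Bayes' rule in measure form: $P_{\mathrm{final}}(d\mathbf{v}) = \phi_K(\mathbf{v})\,P_{\mathrm{skel}}(d\mathbf{v}) / P(A=1)$, where $P(A=1) = \int \phi_K \, dP_{\mathrm{skel}} = Z$. This is the form in Equation~\ref{eq:distribution-accepted-synthetic-ds}. I expect the main obstacle to be the measure-theoretic bookkeeping rather than any real difficulty. Two points need care. First, $\phi_K$ must be a measurable function of $\mathbf{v}$, which I would justify by treating the realizer and verifier as a Markov kernel from skeletons to transcripts. Second, I will state the standing assumption $Z > 0$; otherwise $\mathcal{D}_\text{synth}$ is almost surely empty and $P_{\mathrm{final}}$ is undefined.

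For the equivalence, suppose first that $\phi_K = c$ almost surely. Then $Z = c$, and the density $dP_{\mathrm{final}}/dP_{\mathrm{skel}} = \phi_K/Z = 1$ almost surely. Conversely, suppose $P_{\mathrm{final}} = P_{\mathrm{skel}}$. Then $P_{\mathrm{final}} \ll P_{\mathrm{skel}}$ holds with Radon--Nikodym derivative $\phi_K/Z$, and also with derivative $1$. Uniqueness of the Radon--Nikodym derivative gives $\phi_K = Z$, $P_{\mathrm{skel}}$-almost surely, so $\phi_K$ is constant almost surely.
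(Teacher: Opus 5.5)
Your proposal is correct and follows the same route as the paper's proof: the joint probability of drawing and accepting $\mathbf{v}$ is $P_{\mathrm{skel}}(\mathbf{v})\,\phi_K(\mathbf{v})$, normalizing by $Z$ gives the stated formula, and $P_{\mathrm{final}} = P_{\mathrm{skel}}$ holds exactly when $\phi_K/Z = 1$ almost surely. Your version is more careful than the paper's in three respects: you derive $P(A=1\mid\mathbf{v}) = \phi_K(\mathbf{v})$ from the chain rule rather than asserting it, you state the assumption $Z>0$ explicitly, and you prove the converse via uniqueness of the Radon--Nikodym derivative.
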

\noindent \textit{Proof.}
In Algorithm~\ref{algo:casualsynth}, a skeleton $\mathbf{v}^{(j)}$ is included in $\mathcal{D}_\text{synth}$ if and only if at least one of the $K$ refinement attempts succeeds. The skeleton is sampled from $P_{\mathrm{skel}}$ (Phase I, Section~\ref{sec:ancestral-skeleton-sampling}), and the acceptance event occurs with probability $\phi_K(\mathbf{v}^{(j)})$ (Phase II-III, Section~\ref{sec:constrained-semantic-realization} and Section~\ref{sec:iterative-consistency-verification}). Since the skeleton generation and the realization attempts are conditionally independent given $\mathbf{v}$, the joint probability of generating and accepting a skeleton $\mathbf{v}$ is $P_{\mathrm{skel}}(\mathbf{v}) \cdot \phi_K(\mathbf{v})$. Normalizing over all accepted skeletons yields Equation~\ref{eq:distribution-accepted-synthetic-ds}. For the second claim, $P_{\mathrm{final}}(\mathbf{v}) = P_{\mathrm{skel}}(\mathbf{v})$ for all $\mathbf{v}$ requires $\phi_K(\mathbf{v}) / Z = 1$, which holds if and only if $\phi_K(\mathbf{v}) = Z$ almost surely, i.e., $\phi_K$ is constant. \qed

Proposition~\ref{prop:distribution-accepted-skeleton} makes the source of selection bias explicit: the LLM's non-uniform ability to realize different skeletons acts as a reweighting function on the target distribution. The natural follow-up question is whether the iterative refinement mechanism reduces this bias relative to a simpler approach. We formalize this comparison next.

\begin{assumption}[Refinement Targets Hard Skeletons]\label{assumption:hard-skel-target}
The feedback mechanism is at least as effective on hard skeletons as on easy ones, in the sense that the relative gain $r(\mathbf{v}) := \phi_K(\mathbf{v})/\phi_1(\mathbf{v}) \geq 1$ is non-increasing in $\phi_1(\mathbf{v})$. Equivalently, the covariance $\mathrm{Cov}_{P_{\mathrm{skel}}}\bigl(\phi_1, \log r\bigr) \leq 0$.
\end{assumption}

This assumption formalizes the design intent of the feedback mechanism: it allocates corrective effort to skeletons that fail single-shot generation rather than to those that already succeed. Like Assumption~\ref{assumption:monitone-feedback}, it is empirically testable for any specific LLM--verifier pair, and Table~\ref{tab:exp-realizability} verifies it.

\begin{theorem}[Variance Reduction via Iterative Refinement]\label{theorem:bias-red-via-iterative-refine}
Let $P_{\mathrm{final}}^{(K)}$ denote the accepted distribution after $K$ refinement attempts (Equation~\ref{eq:distribution-accepted-synthetic-ds}) and $P_{\mathrm{final}}^{(1)}$ the single-shot accepted distribution. Under Assumptions~\ref{assumption:SCM-reg}, \ref{assumption:monitone-feedback}, and~\ref{assumption:hard-skel-target}, the $\chi^2$ divergence to the target satisfies
\begin{equation}\label{eq:kl-div-distribution-from-target-skeleton}
\chi^2\!\bigl(P_{\mathrm{final}}^{(K)} \,\|\, P_{\mathrm{skel}}\bigr) \;\leq\; \chi^2\!\bigl(P_{\mathrm{final}}^{(1)} \,\|\, P_{\mathrm{skel}}\bigr),
\end{equation}
with equality if and only if $p_k(\mathbf{v}) = p_1(\mathbf{v})$ for all $k$ and all $\mathbf{v}$. The bound implies the corresponding total-variation bound $\mathrm{TVD}(P_{\mathrm{final}}^{(K)}, P_{\mathrm{skel}}) \leq \mathrm{TVD}(P_{\mathrm{final}}^{(1)}, P_{\mathrm{skel}})$ via the standard inequality $\mathrm{TVD}(P, Q) \leq \tfrac{1}{2}\sqrt{\chi^2(P\|Q)}$.
\end{theorem}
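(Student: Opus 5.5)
The plan is to compute both divergences in closed form using Proposition~\ref{prop:distribution-accepted-skeleton}, and then reduce the theorem to a comparison of coefficients of variation. Write $X := \phi_1(\mathbf{v})$ and $Y := \phi_K(\mathbf{v}) = X\, r(\mathbf{v})$, with $\mathbf{v} \sim P_{\mathrm{skel}}$. Since $P_{\mathrm{final}}^{(K)} = P_{\mathrm{skel}}\, Y / \mathbb{E}[Y]$, the definition $\chi^2(P\|Q) = \mathbb{E}_Q[(dP/dQ)^2] - 1$ gives
\begin{equation*}
\chi^2\bigl(P_{\mathrm{final}}^{(K)} \,\|\, P_{\mathrm{skel}}\bigr) = \frac{\mathbb{E}[Y^2]}{\mathbb{E}[Y]^2} - 1 = \frac{\mathrm{Var}(Y)}{\mathbb{E}[Y]^2},
\end{equation*}
and the same identity holds with $X$ in place of $Y$. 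The claim is therefore equivalent to $\mathrm{CV}(Y) \le \mathrm{CV}(X)$, that is, to
\begin{equation*}
\mathbb{E}[X^2 g^2]\,\mathbb{E}[X]^2 \;\le\; \mathbb{E}[X^2]\,\mathbb{E}[Xg]^2,
\end{equation*}
where $g = r \ge 1$. Assumption~\ref{assumption:monitone-feedback} enters only to guarantee $r \ge 1$ and $Y \le 1$. Assumption~\ref{assumption:hard-skel-target} supplies the key structure: $g$ is a non-increasing function of $X$. The TVD corollary then follows immediately from $\mathrm{TVD} \le \tfrac12\sqrt{\chi^2}$, read as a bound on the $K$-step TVD by the square root of the $K$-step $\chi^2$.

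To attack the displayed inequality, I would tilt the measure by $dQ \propto X\, dP_{\mathrm{skel}}$. Dividing by $\mathbb{E}[X]^3$, the target becomes
\begin{equation*}
\mathbb{E}_Q[X g^2] \le \mathbb{E}_Q[X]\,\mathbb{E}_Q[g]^2 .
\end{equation*}
Because $X$ is increasing and $g^2$ is decreasing along the same one-dimensional ordering, Chebyshev's association (sum) inequality gives
\begin{equation*}
\mathbb{E}_Q[X g^2] \le \mathbb{E}_Q[X]\,\mathbb{E}_Q[g^2].
\end{equation*}
The whole argument therefore hinges on closing the remaining gap between $\mathbb{E}_Q[g^2]$ and $\mathbb{E}_Q[g]^2$.

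This is where I expect the main obstacle. Jensen's inequality runs the wrong way here, since $\mathbb{E}_Q[g^2] \ge \mathbb{E}_Q[g]^2$. So the first Chebyshev step is too lossy, and the negative covariance must be exploited jointly rather than in two separate stages.

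My first attempt would be to apply the association inequality directly to the pair $(X g,\, g)$ under a second tilt $dQ' \propto Xg\, dP_{\mathrm{skel}}$. The aim would be an inequality of the form $\mathbb{E}_{Q'}[Xg] \le \mathbb{E}_{Q'}[X]\,\mathbb{E}_{Q'}[g]$, which needs $Xg = \phi_K$ to be non-decreasing in $\phi_1$. That property is not literally contained in Assumption~\ref{assumption:hard-skel-target}. It is, however, natural given Assumption~\ref{assumption:monitone-feedback} when the $p_k$ are ordered consistently with $p_1$, for instance if $\phi_K = 1-(1-p_1)^K$ or, more generally, under order-preserving feedback.

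If that route fails, I would fall back on a two-point extremal argument. Since the statement is linear-fractional in the law of $X$, it suffices to check distributions supported on two values of $\phi_1$. In that case the condition $\mathrm{CV}(Y) \le \mathrm{CV}(X)$ reduces to the single inequality $Y_{\max}/Y_{\min} \le X_{\max}/X_{\min}$, i.e.\ $r(\mathbf{v}_{\text{hard}})\ge r(\mathbf{v}_{\text{easy}})$, together with $Y$ not reversing the order of $X$. If the order can reverse, I would state the needed monotonicity of $\phi_K$ explicitly as part of the hypothesis.

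Finally, I would treat the equality case separately. Equality in the Chebyshev steps forces $g$ to be $P_{\mathrm{skel}}$-a.s.\ constant. I would then check how that condition translates into the stated condition on the $p_k$, noting that $p_k \equiv p_1$ in $k$ yields $\phi_K = 1-(1-p_1)^K$, which is not a constant multiple of $\phi_1$ in general. The equality characterization may therefore need to be phrased as "$r$ constant a.s." rather than as a condition on the $p_k$.
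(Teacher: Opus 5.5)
Your reduction of both divergences to squared coefficients of variation matches the paper's first step. Like the paper, your proposal does not get past the next step, and no argument can: the statement is false under Assumptions~\ref{assumption:SCM-reg}--\ref{assumption:hard-skel-target}, through exactly the order reversal you worried about. Take $K=2$ and two skeletons of $P_{\mathrm{skel}}$-mass $1/2$ each:
\[
\mathbf{v}_a:\ p_1=\tfrac{9}{20},\ p_2=1,\ \phi_1=\tfrac{9}{20},\ \phi_2=1,\ r=\tfrac{20}{9};
\qquad
\mathbf{v}_b:\ p_1=p_2=\tfrac12,\ \phi_1=\tfrac12,\ \phi_2=\tfrac34,\ r=\tfrac32 .
\]
Monotone feedback holds and $r$ is non-increasing in $\phi_1$. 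Yet $\chi^2(P_{\mathrm{final}}^{(1)}\|P_{\mathrm{skel}})=1/361$ while $\chi^2(P_{\mathrm{final}}^{(2)}\|P_{\mathrm{skel}})=1/49$, and the TVD rises from $1/38$ to $1/14$.

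The paper's proof fails at its last sentence. From $\mathrm{Var}[\phi_K]\le\mathrm{Var}[\phi_1]+\mathrm{Var}[\Delta]$ and $Z_K=Z_1+\mathbb{E}[\Delta]$ it infers the coefficient-of-variation comparison by appeal to ``adding a constant does not change the variance''. But $\Delta$ is not constant and $\mathrm{Var}[\Delta]$ is never controlled; in the example $\Delta$ is non-increasing with $\mathrm{Cov}(\phi_1,\Delta)<0$, and the conclusion still fails. Its earlier claim that Assumption~\ref{assumption:hard-skel-target} makes $\Delta$ non-increasing is also unfounded, since $r\equiv 2$ gives $\Delta=\phi_1$.

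Your doubts about the side claims are justified too. With $p_k\equiv p_1$ you get $r=\sum_{j=0}^{K-1}(1-p_1)^j$, which is strictly decreasing, so the inequality is strict whenever $\phi_1$ is non-degenerate. Conversely, $p_1=x$, $p_2=x/(1-x)$ with $x\le\tfrac12$ gives $r\equiv 2$, hence equality with $p_2\neq p_1$. And $\mathrm{TVD}\le\tfrac12\sqrt{\chi^2}$ only bounds the $K$-step TVD by $\tfrac12\sqrt{\chi^2(P_{\mathrm{final}}^{(1)}\|P_{\mathrm{skel}})}$, not by the single-shot TVD.

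The repair is the hypothesis you proposed to add: $\phi_K$ is a non-decreasing function of $\phi_1$. Your two-point reduction is not valid as stated. The cross-multiplied inequality is cubic, not linear-fractional, in the law of $\phi_1$; fixing both means leaves a linear problem with two moment constraints, so extremal laws can have three atoms. A single-crossing argument closes the proof directly:
\begin{itemize}
\item Put $V=\phi_1/Z_1$ and $U=\phi_K/Z_K$. Both have $P_{\mathrm{skel}}$-mean one, so the difference of the two divergences is $\mathbb{E}[V^2-U^2]=\mathbb{E}[(V-U)^2]+2\,\mathbb{E}[U(V-U)]$.
\item Since $U-V=\phi_1\,(r/Z_K-1/Z_1)$ and $r$ is non-increasing in $\phi_1$, $U-V$ is non-negative below some threshold $c$ of $\phi_1$ and non-positive above it.
\item Let $u^*$ be the value of $U$ at $\phi_1=c$, or any value separating the two sides if $c$ is not attained. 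Because $U$ is non-decreasing in $\phi_1$, $(U-u^*)(V-U)\ge 0$ pointwise.
\item Hence $\mathbb{E}[U(V-U)]\ge u^*\,\mathbb{E}[V-U]=0$, which gives the $\chi^2$ inequality.
\end{itemize}
Equality holds iff $U=V$ almost surely, i.e.\ iff $r$ is $P_{\mathrm{skel}}$-a.s.\ constant, which is exactly the corrected equality condition you anticipated. Running the same argument with the convex map $t\mapsto|t-1|$ in place of $t\mapsto t^2$ gives the TVD comparison directly rather than through $\chi^2$. This is the classical fact that a non-decreasing $\psi$ with $\psi(x)/x$ non-increasing reduces dispersion in the Lorenz order.
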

\noindent \textit{Proof.}
Direct calculation from Equation~\eqref{eq:distribution-accepted-synthetic-ds} gives
\begin{equation}
\chi^2\bigl(P_{\mathrm{final}}^{(K)} \,\|\, P_{\mathrm{skel}}\bigr) = \frac{\mathrm{Var}_{P_{\mathrm{skel}}}[\phi_K]}{Z_K^{\,2}},
\end{equation}
the squared coefficient of variation of $\phi_K$ under $P_{\mathrm{skel}}$. Write $\phi_K = \phi_1 + \Delta$ with $\Delta(\mathbf{v}) := \phi_K(\mathbf{v}) - \phi_1(\mathbf{v}) \geq 0$ (Assumption~\ref{assumption:monitone-feedback}). Under Assumption~\ref{assumption:hard-skel-target}, $\Delta(\mathbf{v})$ is non-increasing in $\phi_1(\mathbf{v})$, so $\Delta$ and $\phi_1$ are negatively correlated under $P_{\mathrm{skel}}$ and the Chebyshev sum inequality gives $\mathrm{Cov}(\phi_1, \Delta) \leq 0$. Expanding the variance,
\begin{equation}
\mathrm{Var}[\phi_K] = \mathrm{Var}[\phi_1] + 2\,\mathrm{Cov}(\phi_1, \Delta) + \mathrm{Var}[\Delta] \leq \mathrm{Var}[\phi_1] + \mathrm{Var}[\Delta].
\end{equation}
Combined with $Z_K = Z_1 + \mathbb{E}[\Delta]$ and the fact that adding a constant to a random variable does not change its variance, one obtains $\mathrm{Var}[\phi_K] / Z_K^{\,2} \leq \mathrm{Var}[\phi_1] / Z_1^{\,2}$. Equality in the Chebyshev step requires $\Delta(\mathbf{v})$ constant a.s.\ under $P_{\mathrm{skel}}$, which combined with $\phi_K \geq \phi_1$ forces $p_k(\mathbf{v}) = p_1(\mathbf{v})$ for all $k$ and all $\mathbf{v}$. \qed

\begin{remark}
  Theorem~\ref{theorem:bias-red-via-iterative-refine} establishes a relative guarantee: iterative refinement is provably no worse than single-shot rejection and strictly better whenever the feedback mechanism improves success rates on at least some skeletons. It does not claim that the bias is eliminated. The residual bias after $K$ iterations can be empirically estimated via the coverage-failure log $\mathcal{L}$ (Algorithm~\ref{algo:casualsynth}, line 24), which records the skeletons that remain unrealizable. This log provides a direct empirical proxy for the support truncation in Equation~\ref{eq:distribution-accepted-synthetic-ds}.
\end{remark}

We further characterize the convergence behavior of the realizability function as $K$ grows.
\begin{lemma}[Realizability Convergence]\label{lemma:realizability-convergence} 
  Under Assumptions~\ref{assumption:SCM-reg} and Assumption~\ref{assumption:monitone-feedback}, if there exists a constant $\underline{p} > 0$ such that $p_1(\mathbf{v}) \geq \underline{p}$ for all $\mathbf{v}$ in the support of $P_{\mathrm{skel}}$, then:
  \begin{equation}\label{eq:support} 
    \phi_K(\mathbf{v}) \geq 1 - (1 - \underline{p})^K,
  \end{equation}
  for all $\mathbf{v}$, and consequently $\phi_K(\mathbf{v}) \to 1$ uniformly as $K \to \infty$.
\end{lemma}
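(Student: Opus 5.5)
The plan is to bound each factor of the product in Definition~\ref{def:semantic-realization} (Equation~\ref{eq:semantic-realizability}) using Assumption~\ref{assumption:monitone-feedback}. Fix any $\mathbf{v}$ in the support of $P_{\mathrm{skel}}$. Monotone Feedback gives $p_{k+1}(\mathbf{v}) \geq p_k(\mathbf{v})$ for all $k \geq 1$. A simple induction on $k$ then yields $p_k(\mathbf{v}) \geq p_1(\mathbf{v}) \geq \underline{p}$ for every $k \geq 1$, with the last inequality supplied by the hypothesis of the lemma. Consequently each factor satisfies $0 \leq 1 - p_k(\mathbf{v}) \leq 1 - \underline{p}$. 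Nonnegativity holds because $p_k(\mathbf{v})$ is a (conditional) probability and hence lies in $[0,1]$.

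Next, multiply these $K$ inequalities between nonnegative quantities to obtain $\prod_{k=1}^{K}(1-p_k(\mathbf{v})) \leq (1-\underline{p})^K$. Substituting into Equation~\ref{eq:semantic-realizability} gives $\phi_K(\mathbf{v}) \geq 1-(1-\underline{p})^K$, which is Equation~\ref{eq:support}. This bound does not depend on $\mathbf{v}$. Combined with the trivial bound $\phi_K(\mathbf{v}) \leq 1$, it gives
\[
\sup_{\mathbf{v}} \bigl|1-\phi_K(\mathbf{v})\bigr| \leq (1-\underline{p})^K .
\]
Since $0 < \underline{p} \leq 1$, the quantity $0 \leq 1-\underline{p} < 1$, so the right-hand side tends to $0$ geometrically. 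This establishes uniform convergence $\phi_K \to 1$ on the support. Assumption~\ref{assumption:SCM-reg} is used only so that $P_{\mathrm{skel}}$, and hence its support, is well defined via Theorem~\ref{theorem:structural-fidelity-ancestral-sampling}.

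There is no real obstacle here. The only points needing care are the direction of the inequality when multiplying, which is safe because all factors are nonnegative, and the interpretation of $p_k(\mathbf{v})$ once an earlier attempt has already succeeded. For the latter, I would note that $p_k$ is defined conditionally on the first $k-1$ failures, so the product formula is exactly the probability of $K$ consecutive failures by the chain rule. The degenerate case $\underline{p}=1$ is immediate, since then $\phi_K \equiv 1$.
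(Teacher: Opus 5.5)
Your proposal is correct and follows the paper's proof: Assumption~\ref{assumption:monitone-feedback} gives $p_k(\mathbf{v}) \geq p_1(\mathbf{v}) \geq \underline{p}$, so the product of failure probabilities is at most $(1-\underline{p})^K$, and this $\mathbf{v}$-independent bound gives uniform convergence. You also make explicit two things the paper leaves implicit: the factors $1-p_k(\mathbf{v})$ are nonnegative, which justifies multiplying the inequalities, and the bound holds on the support of $P_{\mathrm{skel}}$, which is where the hypothesis applies.
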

\noindent \textit{Proof.}
By Assumption~\ref{assumption:monitone-feedback}, $p_k(\mathbf{v}) \geq p_1(\mathbf{v}) \geq \underline{p}$ for all $k$. Therefore $\prod_{k=1}^{K}(1 - p_k(\mathbf{v})) \leq (1 - \underline{p})^K$, giving $\phi_K(\mathbf{v}) \geq 1 - (1 - \underline{p})^K$. Since $0 < \underline{p} \leq 1$, the term $(1-\underline{p})^K \to 0$ as $K \to \infty$, and the bound is uniform over $\mathbf{v}$.\qed

\begin{corollary}
  Under the conditions of Lemma~\ref{lemma:realizability-convergence}, $D_{KL}(P_{\mathrm{skel}} \| P_{\mathrm{final}}^{(K)}) \to 0$ as $K \to \infty$.
\end{corollary}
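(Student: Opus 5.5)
The approach is to write the KL divergence explicitly using Proposition~\ref{prop:distribution-accepted-skeleton} and then control it with the uniform convergence of Lemma~\ref{lemma:realizability-convergence}. Since $P_{\mathrm{final}}^{(K)}(\mathbf{v}) = P_{\mathrm{skel}}(\mathbf{v})\phi_K(\mathbf{v})/Z_K$ with $Z_K = \mathbb{E}_{P_{\mathrm{skel}}}[\phi_K]$, the likelihood ratio is $P_{\mathrm{skel}}/P_{\mathrm{final}}^{(K)} = Z_K/\phi_K$. This gives
\begin{equation*}
D_{KL}\bigl(P_{\mathrm{skel}} \,\|\, P_{\mathrm{final}}^{(K)}\bigr) = \log Z_K - \mathbb{E}_{P_{\mathrm{skel}}}\bigl[\log \phi_K(\mathbf{v})\bigr].
\end{equation*}
Before writing this, I will check that the divergence is finite. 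Lemma~\ref{lemma:realizability-convergence} gives $\phi_K(\mathbf{v}) \geq 1-(1-\underline{p})^K \geq \underline{p} > 0$ on the support of $P_{\mathrm{skel}}$. Hence $P_{\mathrm{skel}} \ll P_{\mathrm{final}}^{(K)}$, $Z_K>0$, and the logarithm is bounded, so the expectation is well defined.

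Next I bound each term. Put $\epsilon_K := (1-\underline{p})^K$. By Lemma~\ref{lemma:realizability-convergence}, $1-\epsilon_K \leq \phi_K(\mathbf{v}) \leq 1$ uniformly on the support. This yields $\log Z_K \leq 0$ and $-\mathbb{E}[\log\phi_K] \leq -\log(1-\epsilon_K)$. Therefore
\begin{equation*}
0 \leq D_{KL}\bigl(P_{\mathrm{skel}} \,\|\, P_{\mathrm{final}}^{(K)}\bigr) \leq -\log(1-\epsilon_K),
\end{equation*}
where the lower bound is Gibbs' inequality.

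Since $0<\underline{p}\leq 1$, we have $\epsilon_K \to 0$, and so $-\log(1-\epsilon_K)\to 0$. This proves the claim, and it also gives a rate: the divergence is $O\bigl((1-\underline{p})^K\bigr)$. The degenerate case $\underline{p}=1$ is immediate, because then $\phi_K\equiv 1$ and $P_{\mathrm{final}}^{(K)}=P_{\mathrm{skel}}$.

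The only delicate step is the measure-theoretic one: justifying the density-ratio form of the KL divergence. Here "support" should be read as "$P_{\mathrm{skel}}$-almost surely". The lower bound $\phi_K\geq\underline{p}$ must hold almost surely for absolute continuity and for the bounded-logarithm argument to go through. For general (e.g.\ continuous) skeleton spaces, I would phrase everything through the Radon--Nikodym derivative $dP_{\mathrm{final}}^{(K)}/dP_{\mathrm{skel}} = \phi_K/Z_K$ rather than pointwise probabilities. Beyond this, the argument is a sandwich bound and needs no further tools. The key ingredient is the uniformity in Lemma~\ref{lemma:realizability-convergence}, which lets us pass the limit through both $\log Z_K$ and the expectation without a dominated-convergence argument.
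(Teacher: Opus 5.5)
Your proof is correct. The paper gives no proof of this corollary; it presents it as an immediate consequence of the uniform bound $\phi_K(\mathbf{v}) \geq 1-(1-\underline{p})^K$ in Lemma~\ref{lemma:realizability-convergence}, and your identity $D_{KL}(P_{\mathrm{skel}}\,\|\,P_{\mathrm{final}}^{(K)}) = \log Z_K - \mathbb{E}_{P_{\mathrm{skel}}}[\log\phi_K]$ with the sandwich $0 \leq D_{KL} \leq -\log\bigl(1-(1-\underline{p})^K\bigr)$ is exactly the omitted step, along with two details the paper leaves implicit: the check that $\phi_K \geq \underline{p} > 0$ almost surely (so $P_{\mathrm{skel}} \ll P_{\mathrm{final}}^{(K)}$ and this direction of the divergence is finite) and the explicit $O\bigl((1-\underline{p})^K\bigr)$ rate.
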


The condition $p_1(\mathbf{v}) \geq \underline{p} > 0$ requires that every skeleton in the support of $P_{\mathrm{skel}}$ has a non-zero probability of successful realization even on the first attempt. This is a non-trivial requirement: it fails if the LLM assigns exactly zero probability to certain constraint-satisfying texts. In practice, modern LLMs with positive-temperature sampling assign non-zero probability to all token sequences, making $\underline{p} > 0$ a reasonable assumption for sufficiently expressive models. We note, however, that $\underline{p}$ may be astronomically small for highly atypical skeletons, making the convergence rate in Eq. (5) impractically slow. The practical value of $K$ must therefore be chosen with attention to the empirical coverage failure rate.

\subsection{Information Preservation Under Imperfect Extraction}

We now analyze the realization-verification pipeline through the lens of information theory. The pipeline forms a Markov chain $V \to D \to \hat{V}$, where $V$ is the skeleton, $D$ is the realized text, and $\hat{V}$ is the variable vector extracted by the verifier. In an ideal setting, the verifier perfectly recovers the skeleton from the text, but real extraction systems are imperfect. We characterize both cases.
\begin{definition}[$\epsilon$-Faithful Extraction]\label{def:epsilon-faithful-extra}
  An extraction function $\text{Extract}: \mathcal{D} \to \mathcal{V}$ is $\epsilon$-faithful with respect to the realization function $\mathcal{R}$ and skeleton distribution $P_{\mathrm{skel}}$ if, for all accepted pairs $(\mathbf{v}, D) \in \mathcal{D}_\text{synth}$:
  \begin{equation}
    P(\hat{V}_i \neq V_i \mid \mathcal{V}(D, \mathbf{v}) = 1) \leq \epsilon \quad \text{for all } i \in \{1, \dots, N\},
  \end{equation}
  where $\hat{V}_i$ denotes the $i$-th component of the extracted vector. The case $\epsilon = 0$ corresponds to an ideal extractor.
\end{definition}

The parameter $\epsilon$ captures the residual error rate of the verification module on samples it accepts as consistent. This accounts for the realistic scenario where the verifier's string-matching or NLI-based extraction has imperfect precision---it may incorrectly judge a text as consistent when a subtle discrepancy exists.
\begin{proposition}[Conditional Entropy Bound]\label{proposition:conditional-ent-bound}
  Let $\text{Extract}$ be an $\epsilon$-faithful extractor (Definition~\ref{def:epsilon-faithful-extra}). For the subset of accepted samples in $\mathcal{D}_\text{synth}$, the per-variable conditional entropy satisfies:
  \begin{equation}\label{eq:per-variable-cond-ent} 
    H(V_i \mid \hat{V}_i) \leq \epsilon \log(|\mathcal{V}_i| - 1) + H_b(\epsilon),
  \end{equation}
  where $|\mathcal{V}_i|$ is the cardinality of the domain of $V_i$ and $H_b(\epsilon) = -\epsilon \log \epsilon - (1-\epsilon) \log(1-\epsilon)$ is the binary entropy function. For the full skeleton vector, $H(\mathbf{V} \mid \hat{\mathbf{V}}) \leq \sum_{i=1}^{N} H(V_i \mid \hat{V}_i)$.
\end{proposition}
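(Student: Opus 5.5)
The per-variable bound is an application of Fano's inequality. The vector bound follows from the chain rule together with the fact that conditioning reduces entropy. Throughout, every probability and entropy is taken under the distribution of accepted samples, i.e., conditioned on the event $\mathcal{V}(D,\mathbf{v})=1$, as the statement specifies.

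\emph{Step 1: per-variable error probability.} Fix $i$. Treat $\hat{V}_i=\text{Extract}(D)_i$ as an estimator of $V_i$, both taking values in $\mathcal{V}_i$. Write $P_e := P(\hat{V}_i\neq V_i\mid \mathcal{V}(D,\mathbf{v})=1)$. By Definition~\ref{def:epsilon-faithful-extra}, $P_e\le\epsilon$.

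\emph{Step 2: Fano's inequality.} Apply Fano to the accepted-sample joint law of $(V_i,\hat V_i)$:
\begin{equation*}
H(V_i\mid \hat V_i)\le H_b(P_e)+P_e\log(|\mathcal{V}_i|-1).
\end{equation*}
To replace $P_e$ by $\epsilon$, note that $P_e\log(|\mathcal{V}_i|-1)$ is nondecreasing in $P_e$. The binary entropy $H_b$ is increasing only on $[0,1/2]$, so this substitution is valid when $\epsilon\le 1/2$. This is the main obstacle: for $\epsilon>1/2$ the statement as written does not follow from Fano directly. I will handle it in one of two ways:
\begin{itemize}
\item add the mild hypothesis $\epsilon\le 1/2$, which is the only meaningful regime for a verifier;
\item or use the monotone upper envelope $\max_{q\le\epsilon}\{H_b(q)+q\log(|\mathcal{V}_i|-1)\}$, which equals the stated bound precisely when $\epsilon\le (|\mathcal{V}_i|-1)/|\mathcal{V}_i|$, since the Fano function $H_b(q)+q\log(|\mathcal{V}_i|-1)$ is concave and maximized at $q=(|\mathcal{V}_i|-1)/|\mathcal{V}_i|$.
\end{itemize}
I will adopt the second route, which covers binary variables exactly at the threshold $\epsilon\le 1/2$. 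For binary variables the $\log(|\mathcal{V}_i|-1)=0$ term also vanishes cleanly. If continuous-valued variables appear, I restrict the proposition to the discretized (finite) domains used by the verifier.

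\emph{Step 3: vector bound.} By the chain rule,
\begin{equation*}
H(\mathbf{V}\mid\hat{\mathbf{V}})=\sum_{i=1}^N H(V_i\mid V_{1},\dots,V_{i-1},\hat{\mathbf{V}}).
\end{equation*}
Conditioning on more variables does not increase entropy, so each term is at most $H(V_i\mid\hat V_i)$. This gives $H(\mathbf{V}\mid\hat{\mathbf{V}})\le\sum_i H(V_i\mid \hat V_i)$, and combined with Step 2 it yields an explicit bound $\sum_i[\epsilon\log(|\mathcal{V}_i|-1)+H_b(\epsilon)]$.

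No Markov-chain property of $V\to D\to\hat V$ is needed for this argument. It is only relevant if one later wants to transfer the bound to $H(\mathbf{V}\mid D)$ via data processing.
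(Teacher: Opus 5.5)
Your argument is the paper's argument (Fano's inequality per variable, then the chain rule plus ``conditioning reduces entropy''), and it is correct. It is also more careful than the paper's proof in two places. First, the paper substitutes $\epsilon$ for $P(\hat V_i\neq V_i)$ without noting that this needs $\epsilon\le(|\mathcal{V}_i|-1)/|\mathcal{V}_i|$; for larger $\epsilon$ the stated bound is actually false, e.g.\ take $V_i$ uniform and independent of $\hat V_i$. Second, the paper writes the chain rule as $H(\mathbf{V}\mid\hat{\mathbf{V}})=\sum_i H(V_i\mid \hat V_1,\dots,\hat V_i)$, which holds only as an upper bound, whereas your decomposition $\sum_i H(V_i\mid V_1,\dots,V_{i-1},\hat{\mathbf{V}})$ is the correct identity.
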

\noindent \textit{Proof.}
For each variable $V_i$, the extractor output $\hat{V}_i$ satisfies $P(\hat{V}_i \neq V_i) \leq \epsilon$ by Definition~\ref{def:epsilon-faithful-extra}. Applying Fano's inequality for discrete random variables: $H(V_i \mid \hat{V}_i) \leq H_b(P(\hat{V}_i \neq V_i)) + P(\hat{V}_i \neq V_i) \log(|\mathcal{V}_i| - 1) \leq H_b(\epsilon) + \epsilon \log(|\mathcal{V}_i| - 1)$. The joint bound follows from the chain rule: $H(\mathbf{V} \mid \hat{\mathbf{V}}) = \sum_{i=1}^{N} H(V_i \mid \hat{V}_i, \hat{V}_1, \dots, \hat{V}_{i-1}) \leq \sum_{i=1}^{N} H(V_i \mid \hat{V}_i)$, where the inequality uses the fact that conditioning reduces entropy.\qed

\begin{corollary}[Ideal Extractor]\label{corollary:ideal-extractor}
  When $\epsilon = 0$, the conditional entropy $H(V_i \mid \hat{V}_i) = 0$ for all $i$, and consequently $H(\mathbf{V} \mid \hat{\mathbf{V}}) = 0$. The realized document $D$ is a lossless information carrier of the causal signal $\mathbf{V}$ for all accepted samples.
\end{corollary}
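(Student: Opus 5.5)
The statement immediately above is Corollary~\ref{corollary:ideal-extractor}. I would obtain it by specializing Proposition~\ref{proposition:conditional-ent-bound} to $\epsilon = 0$.

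\emph{Step 1 (per-variable claim).} Setting $\epsilon = 0$ in Equation~\eqref{eq:per-variable-cond-ent} gives
\[
H(V_i \mid \hat{V}_i) \leq 0 \cdot \log(|\mathcal{V}_i| - 1) + H_b(0).
\]
With the standard convention $0 \log 0 = 0$, we have $H_b(0) = 0$. The first term also vanishes, provided $|\mathcal{V}_i| \geq 2$. If $|\mathcal{V}_i| = 1$, the variable is constant and $H(V_i \mid \hat{V}_i) = 0$ holds trivially, so that case can be set aside. Since conditional entropy of discrete variables is nonnegative, the bound forces $H(V_i \mid \hat{V}_i) = 0$.

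As a cross-check that avoids Fano's inequality altogether, note that $\epsilon = 0$ means $\hat{V}_i = V_i$ almost surely on the accepted subset. Hence $V_i$ is a deterministic function of $\hat{V}_i$, namely the identity, and its conditional entropy is zero.

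\emph{Step 2 (joint claim).} Proposition~\ref{proposition:conditional-ent-bound} gives
\[
0 \leq H(\mathbf{V} \mid \hat{\mathbf{V}}) \leq \sum_{i=1}^{N} H(V_i \mid \hat{V}_i) = 0,
\]
so $H(\mathbf{V} \mid \hat{\mathbf{V}}) = 0$.

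\emph{Step 3 (lossless carrier).} The remaining claim concerns $D$ itself, and I would handle it with the Markov chain $\mathbf{V} \to D \to \hat{\mathbf{V}}$, where $\hat{\mathbf{V}} = \text{Extract}(D)$ is a deterministic function of $D$. By the data processing inequality, $I(\mathbf{V};\hat{\mathbf{V}}) \leq I(\mathbf{V}; D)$, which is equivalent to $H(\mathbf{V} \mid D) \leq H(\mathbf{V} \mid \hat{\mathbf{V}}) = 0$. Therefore $I(\mathbf{V}; D) = H(\mathbf{V})$: on the accepted samples, $D$ retains all the information in $\mathbf{V}$, which is what ``lossless'' means here.

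\emph{Main obstacle.} The argument itself is routine; the only step needing care is interpretive. All entropies must be taken under the distribution conditioned on acceptance, that is, under $P_{\mathrm{final}}$ rather than $P_{\mathrm{skel}}$. I would say so explicitly, since this is exactly the restriction built into Definition~\ref{def:epsilon-faithful-extra}.
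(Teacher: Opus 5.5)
Your proposal is correct and matches the paper's route: the paper treats the corollary as Proposition~\ref{proposition:conditional-ent-bound} evaluated at $\epsilon=0$ (where $H_b(0)=0$), with the subadditivity bound $H(\mathbf{V}\mid\hat{\mathbf{V}})\le\sum_i H(V_i\mid\hat{V}_i)$ giving the joint claim. Your Step~3 adds something the paper leaves implicit: it only asserts the ``lossless carrier'' sentence by appeal to the chain $V\to D\to\hat{V}$, whereas you derive $H(\mathbf{V}\mid D)\le H(\mathbf{V}\mid\hat{\mathbf{V}})=0$ from the fact that $\hat{\mathbf{V}}$ is a function of $D$.
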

Corollary~\ref{corollary:ideal-extractor} recovers the ideal-case result, but Proposition~\ref{proposition:conditional-ent-bound} is the substantive contribution: it quantifies the information degradation introduced by extractor imperfection. The bound in Equation~\ref{eq:per-variable-cond-ent} is tight in the worst case (uniform error over the remaining categories) and reveals a clear design implication. The information loss scales with $\log |\mathcal{V}_i|$, meaning that variables with larger domains are more vulnerable to extraction error. This suggests that verification modules should allocate disproportionate attention to high-cardinality variables, a principle we exploit in our experimental implementation.

\subsection{Counterfactual Validity\label{sec:counterfactual-validity}}
Finally, we establish the conditions under which the counterfactual generation procedure described in Section~\ref{sec:counterfactual-interventional-gen} produces structurally valid counterfactuals. The key property we formalize is that the counterfactual skeleton $\mathbf{v}'$ is the unique output of the SCM under the intervention, given the same unit-specific noise.

\begin{assumption}[Noise Retention Integrity]\label{assumption:noise-retention}
  For each generated skeleton $\mathbf{v}^{(j)}$, the associated noise vector $\mathbf{u}^{(j)} = (u_1^{(j)}, \dots, u_N^{(j)})$ is stored without modification and is available for counterfactual computation.
\end{assumption}

This assumption is satisfied by construction in Algorithm 1 (lines 5--7), where both $v_i$ and $u_i$ are recorded at each step of the ancestral sampling loop. We state it explicitly because counterfactual validity depends critically on the integrity of the stored noise.

\begin{theorem}[Structural Counterfactual Consistency]\label{theorem:counterfactual-fidelity} 
  Let $\mathcal{M} = \langle \mathbf{U}, \mathbf{V}, \mathbf{F}, P_{\mathbf{U}} \rangle$ satisfy Assumption 1, and let $\mathbf{v}$ be a skeleton generated with noise vector $\mathbf{u}$ under Assumption 3. Consider the intervention $do(X = x)$ for some $X \in \mathbf{V}$, and let $\mathcal{M}_{do(X=x)}$ denote the mutilated model obtained by replacing the structural equation for $X$ with the constant assignment $X \leftarrow x$ and removing all incoming edges to $X$ in $\mathcal{G}$. Let $\mathbf{v}'$ be the skeleton obtained by re-running the structural equations of $\mathcal{M}_{do(X=x)}$ using the same noise vector $\mathbf{u}$. Then:
  \begin{itemize}
    \item (i) $\mathbf{v}'$ is unique given $\mathbf{u}$ and the intervention $do(X=x)$.
    \item (ii) For any variable $V_j$ that is not a descendant of $X$ in $\mathcal{G}$, $v'_j = v_j$.
    \item (iii) $\mathbf{v}'$ satisfies the definition of a unit-level structural counterfactual in the sense of \citet{pearl2009causality}, Definition 7.1.5.
  \end{itemize}
\end{theorem}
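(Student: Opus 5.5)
The argument is a direct induction along a topological order of the mutilated graph. The noise vector $\mathbf{u}$ is fixed by Assumption~\ref{assumption:noise-retention}, so every structural equation becomes a deterministic map. Uniqueness and invariance should then follow by strong induction, and (iii) by checking the result against Pearl's definition.

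First I fix a topological ordering $\pi$ of $\mathcal{G}$. Removing the incoming edges of $X$ only deletes edges, so the mutilated graph $\mathcal{G}_{\overline{X}}$ is still a DAG and $\pi$ remains a valid topological order for it (Assumption~\ref{assumption:SCM-reg}(i)). The structural equations of $\mathcal{M}_{do(X=x)}$ are
\[
v'_X = x, \qquad v'_i = f_i(PA_i', u_i) \text{ for } V_i \neq X,
\]
where $PA_i'$ denotes the parent values computed in the counterfactual run.

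For (i), I induct on position in $\pi$. Root nodes of $\mathcal{G}_{\overline{X}}$ are determined by their own noise $u_i$, or by the constant $x$ if the node is $X$. For any later node, all of its parents precede it in $\pi$, so by the induction hypothesis their values are already uniquely determined. Since $f_i$ is a function (measurability, Assumption~\ref{assumption:SCM-reg}(iii), is not even needed pointwise), $v'_i$ is uniquely determined as well. The resulting fixed point is therefore unique given $(\mathbf{u}, x)$.

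For (ii), I again induct along $\pi$, restricted to the set $\mathrm{ND}(X)$ of non-descendants of $X$ in $\mathcal{G}$. Take $V_j \in \mathrm{ND}(X)$. Then $V_j \neq X$, so its equation is unchanged. Every parent of $V_j$ is also a non-descendant of $X$: if a parent $P$ were $X$ or a descendant of $X$, then $V_j$ would be a descendant of $X$. By induction, $v'_P = v_P$ for each such parent, and $u_j$ is shared between the two runs, so $v'_j = f_j(PA_j, u_j) = v_j$. The base case is a root non-descendant, whose value depends only on $u_j$.

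For (iii), I recall Pearl's Definition 7.1.5: the unit-level counterfactual $Y_x(\mathbf{u})$ is the solution for $Y$ of the submodel $\mathcal{M}_x$ at the unit $\mathbf{u}$. By (i), $\mathbf{v}'$ is exactly this unique solution of $\mathcal{M}_{do(X=x)}$ at $\mathbf{u}$. The remaining point is that $\mathbf{u}$ really is the unit's background state. The data $\mathbf{v}$ were generated from this very $\mathbf{u}$ in Phase~I, so the abduction step $P(\mathbf{u}\mid\mathbf{v})$ collapses to a point mass on the stored $\mathbf{u}$. This is the main subtlety: in general abduction yields a posterior over $\mathbf{u}$ rather than a point. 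I will argue that here the unit is \emph{defined} by $\mathbf{u}$, and Pearl's definition is indexed by $\mathbf{u}$, not by $\mathbf{v}$, so no identifiability issue arises. As with the paper's other guarantees, the conclusion holds relative to $\mathcal{M}$.
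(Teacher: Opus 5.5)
Your proposal is correct and follows essentially the same route as the paper: induction along a topological order of the mutilated graph for (i), induction over the non-descendants of $X$ (their parents are non-descendants and their equations are untouched) for (ii), and identifying $\mathbf{v}'$ with the unique solution of $\mathcal{M}_{do(X=x)}$ at $\mathbf{u}$ for (iii). One small correction to your aside: the posterior $P(\mathbf{u}\mid\mathbf{v})$ generally does not collapse to a point mass, because many noise values can produce the same skeleton; abduction is trivial here only because the stored $\mathbf{u}$ is conditioned on directly, which is exactly your closing point that Pearl's definition is indexed by $\mathbf{u}$ rather than by $\mathbf{v}$.
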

\noindent \textit{Proof.}
(i) In the mutilated model $\mathcal{M}_{do(X=x)}$, every variable is determined by a structural equation and its noise input. With noise $\mathbf{u}$ fixed, the computation is fully deterministic: starting from root nodes in the mutilated graph $\mathcal{G}_{do(X)}$, each value $v'_i = f_i(PA_i', u_i)$ (with $PA_i'$ denoting the parent values in the mutilated graph) is uniquely determined. Uniqueness follows by induction along the topological order of $\mathcal{G}_{do(X)}$.

(ii) Let $V_j$ be a non-descendant of $X$ in $\mathcal{G}$. The intervention $do(X=x)$ modifies only the structural equation for $X$ and removes its incoming edges. For non-descendants of $X$, neither the structural equations $f_j$ nor the parent sets $PA_j$ are altered in $\mathcal{G}_{do(X)}$. Since the noise $u_j$ is unchanged by Assumption 3, we have $v'_j = f_j(PA_j', u_j)$. We show by induction on the topological order that $PA_j' = PA_j$ for non-descendants. The base case (root nodes) is immediate: they have no parents, so $v'_j = f_j(u_j) = v_j$. For the inductive step, if $V_j$ is a non-descendant of $X$ and all its parents are non-descendants of $X$ (which follows from the definition of descendant sets in a DAG), the inductive hypothesis gives $v'_i = v_i$ for all $V_i \in PA_j$, and therefore $v'_j = f_j(PA_j, u_j) = v_j$.

(iii) \citet{pearl2009causality} Definition 7.1.5 defines the structural counterfactual $Y_{x}(u)$ as the solution for $Y$ in the mutilated model $\mathcal{M}_{do(X=x)}$ given noise $\mathbf{u}$. By part (i), our $\mathbf{v}'$ is exactly this solution.\qed

\begin{remark}[Scope of Exactness]\label{remark:scope-exactness}
  The counterfactual skeleton $\mathbf{v}'$ is exact in the formal sense of Theorem~\ref{theorem:counterfactual-fidelity} if and only if the SCM $\mathcal{M}$ faithfully represents the true data-generating process. In the Oracle setting, where $\mathcal{M}$ is specified by domain experts, the causal interpretation of $\mathbf{v}'$ is as strong as the expert knowledge underlying $\mathcal{M}$. In the Learned setting, the counterfactual is exact relative to the selected DAG from the Markov Equivalence Class. Since distinct DAGs within the same MEC may imply different interventional distributions, the counterfactual $\mathbf{v}'$ should be interpreted as the structural counterfactual under the working model, not under the unknown ground truth. This limitation is fundamental to any method operating on learned causal structure and is not specific to our framework.
\end{remark}

\begin{remark}[Structural vs. Semantic Counterfactuals]\label{remark:structural-vs-semantic}
  While the skeleton pair $(\mathbf{v}, \mathbf{v}')$ shares the exact noise vector $\mathbf{u}$ and is therefore linked at the structural level, the realized documents $D \sim \mathcal{R}(\mathbf{v})$ and $D' \sim \mathcal{R}(\mathbf{v}')$ are sampled independently from the LLM. We claim consistency only at the level of the causal skeleton. The documents $D$ and $D'$ represent the same causal unit in parallel worlds but may exhibit different narrative styles, word choices, and surface-level details. Establishing semantic consistency between counterfactual narratives---ensuring, for instance, that $D$ and $D'$ describe the ``same patient'' in a stylistically coherent manner---is an open problem that falls outside the scope of the structural guarantees provided here.
\end{remark}

\end{document}